\newcommand{\expr}{\mu} 
\newcommand{\player}{p}              
\newcommand{\arm}{a}                 
\newcommand{\matching}{m}            
\newcommand{\smatching}{m_s}         
\newcommand{\setPlayer}{\mathcal{P}} 
\newcommand{\setArms}{\mathcal{A}}   
\newcommand{\setAgents}{\mathfrak{A}} 
\newcommand{\setMatching}{\mathcal{M}} 
\newcommand{\error}{B}               
\newcommand{\algo}{\mathbf{A}}       
\newcommand{\pref}{\pi}              
\newcommand{\rank}{r}
\newtheorem{definition}{Definition}
\newtheorem{theorem}{Theorem}
\newtheorem{Lemma}{Lemma}
\newtheorem{Remark}{Remark}
\DeclareMathOperator*{\argsort}{arg\,sort}
\title{Probably Correct Optimal Stable Matching for Two-Sided Markets Under Uncertainty}\titlenote{This paper was accepted to International Conference on Autonomous Agents and Multiagent Systems, AAMAS 2025}
\author{Andreas Athanasopoulos}
\affiliation{
  \institution{University of Neuchâtel}
  \city{Neuchatel}
  \country{Switzerland}}
\email{andreas.athanasopoulos@unine.ch}
\author{Anne-Marie George}
\affiliation{
  \institution{University of Oslo}
  \city{Oslo}
  \country{Norway}}
\email{annemage@uio.no}
\author{Christos Dimitrakakis}
\affiliation{
  \institution{University of Neuchâtel}
  \city{Neuchatel}
  \country{Switzerland}}
\email{christos.dimitrakakis@gmail.com}
\begin{abstract}
We consider a learning problem for the stable marriage model under unknown preferences for the left side of the market. We focus on the centralized case, where at each time step, an online platform matches the agents, and obtains a noisy evaluation reflecting their preferences. Our aim is to quickly identify the stable matching that is left-side optimal, rendering this a pure exploration problem with bandit feedback. We specifically aim to find Probably Correct Optimal Stable Matchings and present several bandit algorithms to do so. Our findings provide a foundational understanding of how to efficiently gather and utilize preference information to identify the optimal stable matching in two-sided markets under uncertainty. An experimental analysis on synthetic data complements theoretical results on sample complexities for the proposed methods.
\end{abstract}
\keywords{Two-sided matching markets, Stable matching, Pure exploration}
\newcommand{\BibTeX}{\rm B\kern-.05em{\sc i\kern-.025em b}\kern-.08em\TeX}
\begin{document}


\pagestyle{fancy}
\fancyhead{}

\settopmatter{printacmref=false}
\setcopyright{none}
\renewcommand\footnotetextcopyrightpermission[1]{}
\pagestyle{plain}
\maketitle 


\section{Introduction}
\label{sec:intro}
We explore the problem of two-sided matching markets, where two distinct groups of agents must be matched with one another~\cite{Roth_Sotomayor_1990}. Such markets have diverse real-world applications, ranging from labor markets, like online crowd-sourcing platforms such as Amazon Mechanical Turk, to online dating services \cite{roth1984evolution, gale_Shapley_1962, Haeringer}. In these contexts, the challenge lies in designing a matching algorithm that respects the preferences of both sets of agents. 

In their seminal work, Gale and Shapley introduce the concept of stable matchings \cite{gale_Shapley_1962}. Here, agents are matched one-to-one according to ordinal preferences such that no two agents have the incentive to deviate from the proposed solution. \citeauthor{gale_Shapley_1962}'s Deferred Acceptance algorithm guarantees a stable matching that is optimal for one side of the market. This optimal stable matching always exists and is unique, based on one side's preferences. Many works have considered solutions for this and related problems of finding matchings under known preferences~\cite{manlove2013algorithmics,teo1998geometry}. However, in online settings, agents often lack certainty about their preferences.

More recently, research has focused on cases where agents' preferences are learned over time. This line of work, initiated by \citeauthor{das2005two}, frames the problem as an online decision-making challenge, likening it to a \emph{multi-agent multi-armed bandit} problem where agents must compete for potential matches. In the centralized version of this problem, a platform matches agents according to their preferences, and the agents receive noisy feedback that updates those preferences. In this setting, there is an inherent tension between exploration and exploitation: the platform must decide whether to exploit current estimates of the best stable match or continue exploring to refine agents' preferences. \citeauthor{liu2020competing} formalize this dilemma using regret measures that balance this trade-off.

In this work, we focus on the centralized version of the stable marriage model, but approach the problem from a pure exploration standpoint. Our primary goal is to \textit{efficiently} learn the optimal stable matching w.r.t. the side of the market with uncertain preferences. This perspective emerges from the observation that algorithms designed for the regret-minimization setting, which explore potential stable matches, can hinder the exploration of critical agent pairs that contribute to stability. Accordingly, we move beyond the regret-minimization framework to address the problem:
\begin{center}
\textbf{\textit{How can we efficiently identify the optimal stable matching with high probability?}}    
\end{center}

Our contributions can be summarised as follows:
\\
\textbf{(1) Action Space.} We do not limit our algorithms to stable matchings during exploration, so as to be able to learn player preferences. This is because constraining ourselves to stable matchings may never lead to discovering the optimal stable match (Section~\ref{sec:learning-setting}). \\
\textbf{(2) Solution concept.} We introduce the concept of a \textit{probably correct optimal stable matching} (Section~\ref{sec:pac}). This special case of probably approximately correct (PAC) solutions requires the output matching to be optimal with high probability. \\
\textbf{(3) Uniform exploration strategy.} We begin our analysis with an algorithm that uniformly samples all available agent pairs, similar to the Explore-Then-Commit (ETC) strategy. We demonstrate that this can produce the optimal stable matching with high probability and provide a bound on its sample complexity (Section~\ref{sec:etc}). \\
\textbf{(4) Action elimination algorithms.} Next, we explore an algorithm based on the concept of action elimination (Section~\ref{sec:elimination_algorithm}), which improves sample efficiency and reduces dependence on instance-specific parameters. Additionally, we enhance sample complexity by modifying the stopping criterion, enabling the algorithm to terminate when sufficient information is gathered (Section~\ref{sec:improved_elimination_algorithm}). \\
\textbf{(5) Adaptive sampling.} We study a strategy that adaptively samples agent pairs, improving the complexity in practice (Section~\ref{sec:adapt_sampling}). \\
\textbf{(6) Experimental evaluation.} Finally, we compare the performance of our algorithms on several synthetic instances (Section~\ref{sec:simulations}).

\section{Related Work}
\label{sec:related Work}
The \emph{stochastic} MAB problem, introduced by \cite{thompson1933likelihood}, has been extensively studied in the past few decades, becoming a fundamental tool for analyzing online decision-making problems under uncertainty.
In its classic version, a learner selects one arm in each round and receives a numeric reward drawn from an unknown distribution associated with the selected arm. The goal of maximising the learner's cumulative reward requires a balanced solution of the exploration-exploitation dilemma: finding the best arm requires exploration, while receiving good rewards requires exploitation.
Another popular setting is the pure exploration problem \cite{even2002pac,mannor2004sample, even2006action}, where the learner's objective is to identify the best action (arm to pull) regardless of the cost incurred by choosing suboptimal actions but with a minimal number of samples. This problem is usually studied in the Probably Approximately Correct (PAC) setting, where the learner aims to identify the approximately best arm with high probability. This paper focuses on the pure exploitation setting and the presented solutions liken ETC and UCB strategies. In contrast to the PAC setting, we aim at identifying the best action with high probability, not just an approximation. For a comprehensive overview of MAB, we refer to the book by \citeauthor{lattimore2020bandit}.

The stable marriage problem, is a well studied problem in two-sided markets in which agents from the two sides are matched such that no pair of agents prefers to deviate from the matching~\cite{manlove2013algorithmics,gale_Shapley_1962}, making the solution a \textit{stable matching}. In earlier work on stable matchings, preferences of agents in the market are assumed to be known.
The problem of two-sided markets with unknown preferences has recently attracted considerable attention in various settings. \citeauthor{das2005two} were the first to introduce the marriage problem with unknown preferences on both sides of the market, framing it as a MAB problem. In their work \cite{das2005two}, they empirically studied several algorithms in specific preference settings. A follow-up work \cite{liu2020competing} studied a variant of the problem where one side of the market has unknown preferences and first introduced the notion of Player Stable Optimal/Pessimal regret for the agents in the market. They studied an ETC-like algorithm, proving sublinear bounds for both notions of regret, but requiring additional knowledge of the reward differences of the arms. They also study a UCB-like approach with sublinear bounds on the Player Pessimal Stable regret. However, they identified fundamental issues with their UCB algorithm in achieving sublinear optimal stable regret for the agents. Another research direction studied the decentralized setting where the agents act independently in the market \cite{liu2021bandit, kong2022thompson, sankararaman2021dominate, basu2021beyond}. \citeauthor{kong2022thompson} introduced a Thompson sampling algorithm for the decentralized market, and they also highlight issues with their algorithm in achieving sublinear optimal/pessimal stable regret in the centralized setting. Overall, developing algorithms with sublinear optimal regret remains an open problem. 
To bridge this gap, other lines of research study the centralized problem with transferable utilities to guarantee stability \cite{cen2022regret, jagadeesan2023learning}. In this work, we instead focus on approaching the problem in a pure exploration setting, aiming to efficiently identify a correct optimal stable matching with high probability. To the best of our knowledge, the pure exploration problem has not been explored in this setting. 
\section{Problem Setting}
\label{sec:model}
In this section, we first introduce all necessary definitions of agents, their preferences and stability of matchings for the classic two-sided matching markets. We then describe the problem of learning an optimal stable matching in two-sided matching markets with unknown preferences.

\subsection{Stable matchings in two-sided markets}
\noindent\textbf{Agents:} We consider two distinct sets of agents, \textit{players} $\setPlayer = \{p_1, \cdots, p_N\}$ and \textit{arms} $\setArms = \{\arm_1, \cdots, \arm_K \}$, each having $N$ and $K$ elements, respectively, where $N \leq K$.  Let the set of all agents be $ \setAgents = \setPlayer \cup \setArms$.  We will also refer to the players as the \emph{left} and the arms as the \emph{right} side of the market.

\noindent\textbf{Preferences:} Each agent's preferences are represented as a complete list of agents from the opposite side. More specifically, the \textit{preferences} $\pref_p$ of a player $p$ are given as a permutation over the arms, i.e., $\pref_{p} = (\arm_{p_1}, \cdots, \arm_{p_K}) \in \mathbf{P}(\setArms)$, where $\mathbf{P}(\setArms)$ is the set of permutations over $\setArms$. We say player $p$ prefers arm $\pref_p(i)$ to arm $\pref_p(j)$ if $i<j$. Sometimes, we express these preference by the \textit{order relation} $\succ_{\pref_p}$ where $\pref_p(i) \succ_{\pref_p} \pref_p(j)$ if $i<j$. Here we might omit the subscript, simply writing $\succ$, if it is clear from the context. Further, we define the \textit{rank of an arm} $\arm$ w.r.t. $\pref_{p}$ as the position of $\arm$ in $p$'s preference list, i.e., $r_{p}(\arm):= | \{ \arm_j \in \setArms: \arm_j \succ_{\pref_p} \arm\} | $. Similarly, we can define preferences $\pref_{\arm}$, the corresponding order relation $\succ_{\pref_a}$, and rank function $r_a$ of an arm $\arm \in \setArms$ over the set of players $\setPlayer$.

\noindent\textbf{Matchings:} In the stable marriage model with players $\setPlayer$ and arms $\setArms$, a \textit{matching} is a set of player-arm pairs $\matching \subseteq \setPlayer \times \setArms$  that are pairwise disjoint, representing the pairs of agents that are \textit{matched}. We say an agent $i \in \setAgents$ is \textit{unmatched} under $\matching$ if there exists no pair in $\matching$ that involves $i$. By a slight abuse of notation, we define the equivalent functional representation $\matching: \setAgents \rightarrow \setAgents \cup \{\bot\}$ of a matching $\matching$, where $\matching(i) = j$ and $\matching(j) = i$ for the pair $(i, j) \in \matching$, and $\matching(i) = \bot$ if $i \in \setAgents$ is unmatched. Let \(\setMatching\)  denote the set of all possible matchings.

\noindent\textbf{Stability:} In order for a matching $\matching$ to align with the agents' preferences, Gale and Shapley proposed stability \cite{gale_Shapley_1962} as a notion of equilibrium in the market. A matching $\matching$ is \textit{stable} if there is no pair of agents who prefer to be matched with each other compared to their match under $\matching$. More formally, $\matching$ is a stable matching, if there exists no \textit{blocking pair} $(\player,\arm) \in \setPlayer\times\setArms$, i.e., no pair $(\player,\arm) \notin \matching$ such that 
(1) $ \arm \succ_{\pref_\player} \matching(\player)$ or $\player$ is unmatched, and (2) $\player\succ_{\pref_\arm} \matching(\arm)$ or $\arm$ is unmatched.

In Gale and Shapley's Deferred Acceptance (DA) algorithm~\cite{gale_Shapley_1962}, the agents on one side of the market sequentially propose to the other side, while the other side is temporarily matched with the most preferred agents so far until all agents are matched. They demonstrate that not only does a stable matching always exist for any instance of the marriage problem, but also that multiple stable matchings can exist; we denote this \textit{collection of stable matchings} by $\mathcal{S}$. The stable matching $m^{\star}_s$ produced by the algorithm is optimal for the proposing side of the market and pessimal for the side that received the proposals, in the sense that the agents of the sets are matched with their most/least preferred partner among any feasible stable matching in $\mathcal{S}$. In the remainder of this paper, we refer to the unique optimal stable matching, $m^{\star}_s$, w.r.t. the players' preferences as the \textit{optimal stable matching}.

\subsection{Two-sided markets with unknown player preferences}\label{sec:learning-setting}
In this work, we consider a setting where the players $p \in \setPlayer$ are initially uncertain about their preferences $\pref_p$. Then the players gradually learn their preferences through noisy feedback from matchings that are imposed by a centralised matching algorithm $\algo$.

More formally, the learning process is performed in $T$ rounds where in each round $t$ the algorithm $\algo$ selects a matching $\matching_t$ for the agents. Then, each player $p \in \setPlayer$ receives a stochastic reward $X^t_{p, \matching_t(p)} \in\left [0,1\right]$, distributed according to an unknown distribution $\mathbb{P}_{p, \matching_t(p)}$ with mean $\expr_{p, \matching(p)}$. We assume that a player $\player$ truly prefers arm $\arm_1$ to arm $\arm_2$ if $\expr_{p, \arm_1} > \expr_{p, \arm_2}$, while we denote the difference of their expected rewards by $\Delta_{\player,\arm_1,\arm_2} = \mid\expr_{\player,\arm_1} - \expr_{\player,\arm_2}\mid>0$.

After each round $t$, the players update their preferences $\hat{\pi}_p(t)$ based on the estimate of the expected rewards $\hat{\expr}_{p, a}(t)$ (sample mean) and report it to the algorithm. The objective of the algorithm is to efficiently identify the optimal stable matching $m^{\star}_s$ with respect to the (initially unknown) preferences $\pref_p$ with high probability. Thus, at each round $t$, the algorithm aims to select a matching $m_t \in \setMatching$ to gather relevant information. 

To summarize the process, at every round $t \in \{1, \ldots, T\}$:
\begin{enumerate}
    \item The algorithm $\algo$ selects a matching $m_t\in \setMatching$.
    \item The players $p \in \setPlayer$ receive a reward $X^t_{p, \matching_t(p)}$.
    \item The players update $\hat{\expr}_{p, m_t(p)}(t)$ and report it to $\algo$.
    \item The algorithm $\algo$ might terminate the process (according to a stopping criterion) and return a matching.
\end{enumerate}
The returned matching should be the true optimal stable matching $m^{\star}_s$ with high probability.

Note that here we do not require the action selected by $\algo$ to be a \textit{stable} matching. Since preferences of agents are uncertain, we cannot be certain which matchings are stable. Yet related work restricts $\algo$ to select only matchings that are stable with respect to confidence bounds \cite{das2005two, liu2020competing, cen2022regret}. Because we can compute confidence bounds on the estimated means $\expr_{\player}$, we could identify blocking pairs with high probability. However, the following example, which \citeauthor{liu2020competing} use to demonstrate that their UCB-like algorithm fails to achieve sublinear player-optimal stable regret \cite{liu2020competing}, shows that excluding actions of matchings with high-probability blocking pairs will in some cases not allow us to identify the optimal stable matching. This motivates our setting in which the action space of algorithm $\algo$ is the set of all matchings \(\setMatching\) disregarding stability.

\begin{example}[\citeauthor{liu2020competing} \cite{liu2020competing}]\label{ex:motivating-example}

Consider a market with three agents on each side and the following true agents' preferences.
\begin{center}
    \begin{tabular}{ll}
\toprule
\toprule
 Players                  &            Arms \\
$\pref_{\player_1} : \arm_1 \succ \arm_2 \succ \arm_3$ & $\pref_{\arm_1}: \player_2 \succ \player_3 \succ \player_1$ \\
$\pref_{\player_2} : \arm_2 \succ \arm_1 \succ \arm_3$ & $\pref_{\arm_2}: \player_1 \succ \player_2 \succ \player_3$ \\
$\pref_{\player_3} : \arm_3 \succ \arm_1 \succ \arm_2$ & $\pref_{\arm_3}: \player_3 \succ \player_1 \succ \player_2$ \\ 
\bottomrule
\bottomrule
\end{tabular}
\end{center}
Here, the only stable matchings are the player-optimal stable matching \(\matching_{s1} = \{(\player_1, \arm_1), (\player_2, \arm_2), (\player_3, \arm_3)\}\) and the arm-optimal stable matching \(\matching_{s2} = \{(\player_1, \arm_2), (\player_2, \arm_1), (\player_3, \arm_3)\}\) which is player-pessimal.

Now assume that players \(\player_1\) and \(\player_2\) are certain about their preferences and correctly report them, while player \(\player_3\)  is uncertain about the order of $\arm_1$ and $\arm_3$, but \(\player_3\)'s current estimates of the sample means $\hat{\expr}_{\player_3, \arm_1}$, $\hat{\expr}_{\player_3, \arm_2}$ and $\hat{\expr}_{\player_3, \arm_3}$ yield $\hat{\pref}_{\player_3}: \arm_1 \succ \arm_3 \succ \arm_2$. Under these preferences, the only stable matching is \(\matching_{s2}\). Further, the only matchings that could be stable w.r.t any possible preference $\hat{\pref}_{\player_3} \in \mathbf{P}(\setArms)$ of $\player_3$, are \(\matching_{s1}\) or \(\matching_{s2}\).

In this situation, if the matching algorithm $\algo$ is only permitted to select potentially stable matchings, i.e., \(\matching_{s1}\) or \(\matching_{s2}\), the player $\player_3$ will receive a sample from $\mathbb{P}_{\player_3, \arm_3}$. This serves $\player_3$ to grow more certain about the estimate $\hat{\expr}_{\player_3, \arm_3}$. However, the uncertainty about $\hat{\expr}_{\player_3, \arm_1}$ persists. Thus, if $\hat{\expr}_{\player_3, \arm_1} > \expr_{\player_3, \arm_3}$ then with high probability the updated estimated preference order $\hat{\pref}_{\player_3}$ will remain $\arm_1 \succ \arm_3 \succ \arm_2$, i.e., yielding the same situation as before. Thus, no matter the number of samples, with high probability  the returned matching is \(\matching_{s2}\) --- the player-pessimal stable matching!

In contrast, if $\algo$ is permitted to sample any matching, it could also sample a matching that matches $\player_3$ to $\arm_1$ in order to grow more certain about the estimate $\hat{\expr}_{\player_3, \arm_1}$ and eventually determining $\hat{\expr}_{\player_3, \arm_3} > \hat{\expr}_{\player_3, \arm_1}$ and correctly returning \(\matching_{s1}\).
\end{example} 

While this richer action space of all matchings (regardless of stability) allows us to circumvent the shortcomings outlined in Example 1, it also permits and even encourages the selection of matchings that are known with high probability not to be stable. Although this approach efficiently identifies a true stable matching (and is indeed necessary, as the example shows), it might not be desirable in some applications to implement such "unfair" matchings as intermediate actions. In our case, we decide to focus on the pure exploration setting, aiming only to quickly identify the optimal stable matching.

\subsection{Probably Correct Optimal Stable Matching}
\label{sec:pac}
 We introduce the notion of \textit{Probably Correct Optimal Stable Matching} (PCOS). This concept is similar to the ‘Probably Approximately Correct’ (PAC) setting \cite{even2006action}, where the goal is to find an $\epsilon$-optimal arm with high probability using as few samples as possible in the context of MAB. Note that in the context of stable matchings it is non-trivial to define what an \textit{approximation} of the optimal stable matching is. 
 Consider Example~\ref{ex:motivating-example} and assume for player $\player_3$ the gap between true mean rewards $\Delta_{\player_3,\arm_1,\arm_3} = \mid\expr_{\player_3,\arm_1} - \expr_{\player,\arm_3}\mid$ is less than $\epsilon$. Then even an $\epsilon$-approximation of the true means is not sufficient to identify the true optimal stable matching \(m_{s1}\) and instead \(m_{s2}\) is returned. In case one is tempted to consider approximations w.r.t. the sum of players' rewards over their matches we remark that here the approximation value of \(\matching_{s_2}\) towards \(\matching_{s_1}\) depends on the reward gaps $\Delta_{\player_1,\arm_1,\arm_2}$ and $\Delta_{\player_2,\arm_1,\arm_2}$ which might be arbitrarily large.
 We thus leave more extensive discussions around approximate solutions to future work and instead focus on the following solution concept.
 
\begin{definition}[Probably Correct Optimal Stable Matching]
We say that an algorithm $\algo$ is a $\delta$-PCOS algorithm with sample complexity $T$, if it outputs the optimal stable matching, $m_s^{\star}$, 
with probability at least $1-\delta$ after at most $T$ sampled matchings.
\end{definition}

\begin{Remark}
We measure the sample complexity in terms of the number of matchings performed by algorithm $\algo$.
\end{Remark}

As correctly identifying the optimal stable matching is closely related to correctly identifying the agent's preferences we introduce the following two notions. 

\begin{definition}[Completely Correct Preferences]
We say that the estimated preferences $\hat{\pref}$ of players are completely correct if they are exactly the same as the true preferences $\pref$, i.e., \(\hat{\pref}[i] = \pref[i] \; \forall \; i \in [1, \ldots, K]\).
\end{definition}
\begin{definition}[Partially Correct Preferences up to an Arm]
We say that the estimated preferences \(\hat{\pref}\) of players are partially correct up to an arm \(\arm \in \setArms\) if they are correct up to the position \(\rank_p[\arm]\) with the true preference \(\pref\), i.e., \(\hat{\pref}[i] = \pref[i] \; \forall \; i \leq \rank_p[\arm]\).
\end{definition}

We formally state the relation between the probability of the DA algorithm returning the optimal stable matching and the preference estimates being correct.
\begin{proposition}
\label{pr:probability-of-stable-matching-bound}
Let $\matching_{\hat{\pref}}$ be the output of the DA algorithm running using estimated preferences $\hat{\pref}$ for the agents $\setAgents$. The probability that $\matching_{\hat{\pref}}$ is equal to the optimal stable matching $\smatching^{\star}$ for the true preferences $\pref$ is at least as high as the probability of those estimates $\hat{\pref}$ being correct, i.e.:

\begin{equation*}
    P(\matching_{\hat{\pref}} = \smatching^{\star}) \geq P\left(\hat{\pref}_i = \pref_i \; \forall \; i \in \setAgents\right).
\end{equation*}
\end{proposition}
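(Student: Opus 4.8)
The plan is to establish the claimed inequality by showing that the event "estimated preferences are completely correct" is a subset of the event "the DA algorithm returns the optimal stable matching." Once this set inclusion is established, the probability inequality follows immediately by monotonicity of probability measures. So the proof reduces to a deterministic statement about the DA algorithm: \emph{if} $\hat{\pref}_i = \pref_i$ for all agents $i \in \setAgents$, \emph{then} $\matching_{\hat{\pref}} = \smatching^{\star}$.

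\textbf{First} I would recall the key structural fact from Gale and Shapley stated earlier in the excerpt: for any fixed profile of preferences, the DA algorithm (with players proposing) returns a matching that is stable and, moreover, is the \emph{unique} player-optimal stable matching with respect to that profile. This uniqueness is the crucial lever. \textbf{Next}, I would observe that on the event that all estimated preferences coincide exactly with the true preferences, the DA algorithm is run on precisely the true profile $\pref$. By the Gale--Shapley guarantee applied to $\pref$, its output is exactly the player-optimal stable matching for $\pref$, which is by definition $\smatching^{\star}$. Hence $\hat{\pref} = \pref$ deterministically forces $\matching_{\hat{\pref}} = \smatching^{\star}$, giving the set inclusion
\begin{equation*}
\{\hat{\pref}_i = \pref_i \;\forall\, i \in \setAgents\} \subseteq \{\matching_{\hat{\pref}} = \smatching^{\star}\}.
\end{equation*}
Applying $P(\cdot)$ to both sides and using monotonicity yields the proposition.

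\textbf{The main subtlety} — and the step I would treat with the most care — is that the inequality is stated with the event $\{\hat{\pref}_i = \pref_i \;\forall\, i \in \setAgents\}$ involving \emph{all} agents, including the arms, whereas the learning setting in Section~\ref{sec:learning-setting} assumes only the players have uncertain preferences (the arms' preferences are known). The proposition as stated is more general: it does not presume arm preferences are correct, so the event conditions on correctness of the full profile. I would make explicit that the DA algorithm's output depends on both sides' reported preferences, so exact agreement on \emph{both} sides is what guarantees it is run on the genuine true instance. One should note that the reverse inclusion need not hold: the DA output can equal $\smatching^{\star}$ even when some preferences are misreported (e.g., errors among arms that are matched no differently, or among low-ranked arms that are never reached during proposals), which is exactly why the statement is an inequality rather than an equality — the event on the right is strictly larger in general.

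\textbf{Finally}, I would remark that no independence, distributional, or concentration assumptions are needed here: the result is purely a consequence of the determinism of DA given a fixed profile together with Gale--Shapley optimality and uniqueness. The probabilistic content is entirely deferred to later bounds on $P(\hat{\pref}_i = \pref_i \;\forall\, i)$, which this proposition reduces the matching-correctness question to.
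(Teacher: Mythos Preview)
Your proof is correct and follows essentially the same approach as the paper: both argue that DA run on the true preferences deterministically returns $\smatching^{\star}$, hence the event of correct preferences is contained in the event of correct output, and monotonicity of probability gives the inequality. Your version is simply more explicit about the set inclusion, the role of uniqueness, and why the inequality can be strict, whereas the paper compresses this to a single sentence.
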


\begin{proof}
This follows directly from the fact that the Deferred Acceptance algorithm with correct preferences always returns the optimal stable matching $\smatching^{\star}$.
\end{proof}

\section{Naive Uniform Exploration}
\label{sec:etc}
We begin our analysis by considering an algorithm that uniformly explores every available pair of agents, similarly to the ETC algorithm in the regret minimization setting \cite{liu2020competing}. Although ETC can achieve sublinear player-optimal stable regret, this does not always imply convergence to the correct optimal stable matching. Here, we provide a uniform sampling strategy that can identify the true optimal stable matching with high probability.

Our Naive Uniform Exploration (NUE) Algorithm uniformly samples matchings such that each pair of agents is sampled for a fixed number times, depending on the minimum reward difference between the arms. After the exploration rounds, the algorithm estimates the preferences of the players using the sample mean of the rewards for the arms and commits to the matching produced by the DA algorithm using these estimated preferences. 

\begin{algorithm}
\caption{Naive Uniform Exploration (NUE)}\label{alg:etc}
\begin{algorithmic}[1]
\Require $\delta > 0$, $\setPlayer$, $\setArms$, $\Delta_{\min} = \min_{p\in \setPlayer, i, j \in \setArms} \Delta_{p, i, j} $, $\{\pref_a\}_{\arm \in \setArms}$

\State $h =  \lceil\frac{2 \ln(2KN/\delta)} {\Delta_{\min}^2}\rceil$
\For{ $t$ \ in \ $\{1, \cdots, hK\}$}
    \State $m(p_i) = \arm_j$ , $j = (t+i-2 \mod K) + 1 \; \forall i \; \in \; \{1, \dots,N\}$
    \State \textit{Sample $\matching$ 
     and update $\hat{\expr}_{p_i,m(p_i)}(t)$  $\forall i \; \in \; \{1, \dots,N\} $
    }
\EndFor
\State $\hat{\pref}_{\player} = \argsort_{\arm \in \setArms} \hat{\expr}_{\player,\arm} \; \forall \player \in \setPlayer$ 
\State $m_{\hat{\pi}} = DA(\{\hat{\pref}_p\}_{\player \in\setPlayer}, \{\pref_a\}_{\arm \in \setArms})$ 
\State \Return $m_{\hat{\pi}}$
\end{algorithmic}
\end{algorithm}

\begin{theorem}\label{th:theorem_etc}
Let $\Delta_{\min} = \min_{p\in \setPlayer, i, j \in \setArms} \Delta_{p, i, j} $. Algorithm~\ref{alg:etc} is a $\delta$-PCOS algorithm, and the number of matchings is bounded by:
\begin{equation}
    O(K\frac{\ln(KN/\delta)}{\Delta_{\min}^2}).
\end{equation}
\end{theorem}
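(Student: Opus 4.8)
The plan is to bound the probability that the estimated preferences differ from the true preferences, and then invoke Proposition~\ref{pr:probability-of-stable-matching-bound} to transfer this to the probability that the DA output equals $\smatching^{\star}$. First I would observe that the sampling schedule in line~3 is a cyclic (round-robin) assignment: over the $hK$ rounds, each player $p_i$ is matched to each arm exactly $h$ times, so every estimate $\hat{\expr}_{p,\arm}(hK)$ is an average of exactly $h = \lceil 2\ln(2KN/\delta)/\Delta_{\min}^2\rceil$ i.i.d.\ samples. The key event to control is whether the argsort of the estimated means recovers the true preference order for every player, which by definition holds precisely when the estimated and true preferences coincide.

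Next I would reduce a preference-ordering error to a pairwise comparison error. The estimated order $\hat{\pref}_p$ disagrees with $\pref_p$ only if, for some pair of arms $i,j$ with $\expr_{p,i} > \expr_{p,j}$, the empirical means are flipped, i.e.\ $\hat{\expr}_{p,i}(hK) \le \hat{\expr}_{p,j}(hK)$. A standard sufficient condition for ordering correctness is that every empirical mean lies within $\Delta_{\min}/2$ of its true mean: if $|\hat{\expr}_{p,\arm} - \expr_{p,\arm}| < \Delta_{\min}/2$ for all $p,\arm$, then no true gap (which is at least $\Delta_{\min}$) can be reversed, so all $N$ preference lists are recovered exactly. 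Thus I would bound the failure probability via the complementary event $\bigcup_{p,\arm}\{|\hat{\expr}_{p,\arm} - \expr_{p,\arm}| \ge \Delta_{\min}/2\}$.

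I would then apply Hoeffding's inequality to each of the $NK$ concentration events. Since rewards lie in $[0,1]$ and each estimate averages $h$ samples, $P(|\hat{\expr}_{p,\arm} - \expr_{p,\arm}| \ge \Delta_{\min}/2) \le 2\exp(-2h(\Delta_{\min}/2)^2) = 2\exp(-h\Delta_{\min}^2/2)$. The choice of $h$ is engineered so that $2\exp(-h\Delta_{\min}^2/2) \le \delta/(NK)$. A union bound over all $NK$ pairs then yields that all preferences are completely correct with probability at least $1-\delta$. Combining with Proposition~\ref{pr:probability-of-stable-matching-bound} gives $P(m_{\hat{\pi}} = \smatching^{\star}) \ge 1-\delta$, establishing the $\delta$-PCOS property. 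Finally, the sample complexity in number of matchings is exactly $hK = K\lceil 2\ln(2KN/\delta)/\Delta_{\min}^2\rceil = O\!\left(K\tfrac{\ln(KN/\delta)}{\Delta_{\min}^2}\right)$, as claimed.

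The main obstacle is getting the reduction in the second step airtight: I must confirm that the $\Delta_{\min}/2$ confidence-radius argument genuinely rules out \emph{every} possible adjacent transposition in the sorted order simultaneously, not merely individual pairwise flips, and that the union bound is taken over the correct index set (all $NK$ player-arm estimates, or equivalently all within-player arm pairs) so that the constant in $h$ matches the stated $\ln(2KN/\delta)$ factor. The rest is routine concentration bookkeeping.
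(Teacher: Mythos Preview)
Your proposal is correct and follows essentially the same approach as the paper: reduce to correctness of the estimated preferences via Proposition~\ref{pr:probability-of-stable-matching-bound}, then apply Hoeffding with a union bound over the $NK$ player--arm pairs so that $2\exp(-h\Delta_{\min}^2/2)\le \delta/(NK)$. The only cosmetic difference is that the paper organizes the union bound as ``position $i$ in $\hat\pref_p$ is wrong'' and then reduces each such event to a flip of two consecutive arms, whereas you control the single event ``every $\hat\expr_{p,a}$ is within $\Delta_{\min}/2$ of $\expr_{p,a}$'' directly; both routes yield the same Hoeffding exponent and the same $NK$-term union bound, and your concern about the $\Delta_{\min}/2$-radius reduction is unfounded---it is immediate that uniform $\Delta_{\min}/2$-accuracy forces the correct total order for every player.
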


\begin{proof}[Proof Sketch]
The sample complexity follows from the definition of the algorithm, as $O\left(Kh\right) = O(K\ln(KN/\delta)/\Delta_{\min}^2)$.

We now prove that Algorithm~\ref{alg:etc} is an $\delta$-PCOS algorithm, i.e., $\Pr(m_{\hat{\pi}} = \smatching^{\star})\leq 1-\delta$. Proposition~\ref{pr:probability-of-stable-matching-bound} implies that $\Pr(m_{\hat{\pi}} \neq \smatching^{\star}) \leq \Pr(\bigcup_{\player \in \setPlayer} \hat{\pref}_\player \neq \pref_\player) $. Using the union bound over the set of players $\setPlayer$ and arms $\setArms$ we have  $\Pr(m_{\hat{\pi}} \neq \smatching^{\star}) \leq \sum_{\player \in \setPlayer}\sum_{i=1}^{N} \Pr(\hat{\pref}_{\player}[i] \neq \pref_\player[i])$. Thus it is sufficient to show $\Pr(\hat{\pref}_{\player}[i] \neq \pref_\player[i])\leq \delta/NK.$

The event that $\hat{\pref}_{\player}[i] \neq \pref_\player[i]$ can only occur if the player $\player$ wrongly orders two consecutive arms $\arm$ and $\arm'$, i.e., $\hat{\mu}_{\player,\arm} \leq \hat{\mu}_{\player,\arm'}$ when $\mu_{\player,\arm} \geq \mu_{\player,\arm'}$. By construction of the sampled matchings (see line 3 in Algorithm~\ref{alg:etc}), every player $\player$ receives $h$ many rewards for every arm. Similar to the proof of Theorem 6 in~\cite{even2006action} and by using Hoeffding's inequality, we have that $\Pr(\hat{\expr}_{\player,\arm} \leq \hat{\expr}_{\player,\arm'}) \leq \Pr(\hat{\expr}_{\player,\arm} \leq \expr_{\player,\arm} - \Delta_{\player,\arm,\arm'} / 2) + \Pr(\hat{\expr}_{\player,\arm'} \geq \expr_{\player,\arm'} + \Delta_{\player,\arm,\arm'} / 2) 
\leq 2 \exp({- \frac{\Delta_{\min}^2}{2} h})
\leq \delta/NK$ which concludes the proof.
\end{proof}
\section{Elimination Algorithm}
\label{sec:elimination_algorithm}
In this section, we propose an elimination algorithm, similar to the one in \cite{even2006action} for MAB, which improves sample complexity and does not require prior knowledge of the reward differences \(\Delta_{\player,\arm, \arm'}\) of the agents. The key idea of our Elimination Algorithm (Algorithm~\ref{alg:elim}) is to successively eliminate player-arm pairs \((\player, \arm)\) when the position of arm \(\arm\) in player \(\player\)’s preference list can be determined with high probability. Eliminating every player-arm pair can eventually guarantee the correct estimation of the preferences of players with high probability, and thus identification of the optimal stable matching, as implied by Proposition~\ref{pr:probability-of-stable-matching-bound}.

Algorithm~\ref{alg:elim} operates in rounds, where in each round \(t\), we sample matchings such that each player \(p \in \setPlayer\) receives a reward from every non-eliminated arm from the set of available arms \(S_{\player}\), exactly ones (line 5). For this we compute a \textit{minimal matching cover}, i.e., a cardinality-minimal set of matchings $\mathfrak{M}$ that cover all remaining pairs $\{(p,a) \in \setPlayer \times \setArms \mid a \in S_\player\} \subseteq \bigcup_{m\in\mathfrak{M}} m$. We denote $\mathfrak{M}(X)$ to be a minimal matching cover on a given set of pairs $X$. 

In the case of a bipartite graph, finding a minimal matching cover can be framed as a minimum edge coloring problem, where the edges of one color correspond to a matching in the matching cover \cite{gabow1978algorithms}. According to the Kőnig-Hall Theorem, the optimal number of colors required is equal to the maximum degree of the graph --- in our case the maximal degree $\mathit{deg}_t = \Delta(G(E_t))$ of the bipartite graph $G(E_t)$ with edges $E_t$ corresponding to available pairs $\{(p,a) \in \setPlayer \times \setArms \mid a \in S_\player\}$. Thus, in every round \(t\), we sample $\mathit{deg}_t$ matchings (see also Appendix~\ref{apnd:matching_algo}).

The algorithm terminates once we eliminate all arms for every player, i.e., \(S_{\player} = \emptyset\) for all \(p \in \setPlayer\). We use the elimination rule that eliminates a \((\player, \arm)\) pair when the arm \(\arm\) has no overlapping confidence interval \(C_{p,a}\) with other available arms in \(S_{\player}\) for player \(\player\). Together with the use of anytime confidence intervals\footnote{Anytime confidence intervals ensure valid coverage probabilities at any time $t$.}, the elimination rule can guarantee the position of arm \(\arm\) in the preference list of player \(\player\) with high probability. 

\begin{algorithm}
\caption{Elimination Algorithm}\label{alg:elim}
\begin{algorithmic}[1]
\Require $\delta > 0$, $\setPlayer$, $\setArms$, $\{\pref_a\}_{\arm \in \setArms}$
\State $S_{\player} = \setArms \; \forall \player \in \setPlayer$
\State $t=1$

\State $\hat{\expr}_{\player,\arm}(t) = 0 \; \forall p \in \setPlayer, a  \in \setArms$
\While{$\mid \cup_{\player \in \setPlayer}S_{\player} \mid  \geq 1$}
\State \textit{Sample all matchings $m \in \mathfrak{M} \left(\{(p,a) \in \setPlayer \times \setArms \mid a \in S_\player\}\right)$}
\State \textit{Update $\hat{\expr}_{\player,\arm}(t)$ $\forall \player \in \setPlayer$ and $\arm  \in S_{\player}$}
\State $\error_t = \sqrt{ \frac{\ln{(4KNt^2/\delta)}}{2t}}$
\State $C_{\player,\arm} = [\hat{\expr}_{\player, \arm}(t) \pm \error_t] \; \forall \player \in \setPlayer$ and $\arm  \in S_{\player}$
\State $S_{\player} = S_{\player} \setminus \{ \arm : C_{\player, \arm} \cap C_{\player, j} = \emptyset \; \forall j \in \setArms \setminus \{\arm\} \} \; \forall \player \in \setPlayer$
\State $t = t+1$
\EndWhile
\State $\hat{\pref}_{\player} = \argsort_{\arm \in \setArms} \hat{\expr}_{\player,\arm} \; \forall \player \in \setPlayer$ 
\State \Return $DA(\{\hat{\pref}_p\}_{\player \in\setPlayer}, \{\pref_a\}_{\arm \in \setArms})$ 
\end{algorithmic}
\end{algorithm}

As a starting point for the overall sample complexity of Algorithm~\ref{alg:elim}, we first calculate how many samples $t_{\player,\arm}$ that are sufficient to eliminate an arm $\arm$ from a player $\player$'s list of available arms $S_{\player}$ as in line 9 of the algorithm. 

\begin{Lemma}
\label{Lemma:informal_t}
For a player $\player \in \setPlayer$ and arm $\arm \in \setArms$, let $\Delta_{\player,\arm} = \min_{\arm' \in \setArms \setminus \{\arm\}} \Delta_{\player, \arm, \arm'}$. With probability at least $1-\delta$, the number of samples $t_{\player,\arm}$ needed to eliminate an arm $\arm$ from $S_\player$ in line 9 of Algorithm~\ref{alg:elim}, is at most
\begin{equation}
t_{\player,\arm} = O\left(\frac{\ln{\left(KN/\delta \Delta_{\player,\arm} \right)}}{\Delta_{\player,\arm}^2}\right).
\end{equation}
\end{Lemma}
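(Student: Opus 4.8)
The plan is to mirror the action-elimination analysis of \cite{even2006action}, exploiting the fact that in Algorithm~\ref{alg:elim} each round contributes exactly one fresh sample of every still-available arm. Thus after $t$ rounds player $\player$ has precisely $t$ samples of each $\arm \in S_\player$, so the round index coincides with the sample count and with the index used in $\error_t$. The argument has three parts: a high-probability ``clean event'', a deterministic sufficient condition for elimination on that event, and the resolution of a transcendental inequality for $t$.

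First I would construct a single clean event $\mathcal{E}$ on which every empirical mean tracks its true mean at every round. For a fixed pair $(\player,\arm)$ and round $t$, Hoeffding's inequality together with the choice $\error_t = \sqrt{\ln(4KNt^2/\delta)/(2t)}$ gives $\Pr(|\hat{\expr}_{\player,\arm}(t)-\expr_{\player,\arm}| \ge \error_t) \le 2\exp(-2t\error_t^2) = \delta/(2KNt^2)$. Summing over $t \ge 1$ using $\sum_{t\ge1} t^{-2} = \pi^2/6 < 2$, and then taking a union bound over all $NK$ player-arm pairs, shows that the complementary event $\mathcal{E}$, on which $|\hat{\expr}_{\player,\arm}(t)-\expr_{\player,\arm}| < \error_t$ holds simultaneously for all pairs and all rounds, has probability at least $1-\delta$. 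Everything below is deterministic once we condition on $\mathcal{E}$, and the resulting per-arm bounds therefore hold with probability at least $1-\delta$ exactly as claimed.

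Next I would establish a sufficient condition for removing $(\player,\arm)$ in line 9. On $\mathcal{E}$ the interval $C_{\player,\arm}$ lies inside $[\expr_{\player,\arm} \pm 2\error_t]$, so for any other available arm $j \in S_\player$ the empirical gap satisfies $|\hat{\expr}_{\player,\arm}-\hat{\expr}_{\player,j}| > \Delta_{\player,\arm,j} - 2\error_t \ge \Delta_{\player,\arm} - 2\error_t$. Hence $C_{\player,\arm}$ and $C_{\player,j}$ become disjoint as soon as $\Delta_{\player,\arm}-2\error_t > 2\error_t$, i.e. once $\error_t < \Delta_{\player,\arm}/4$; because this threshold is uniform over all competitors, $\arm$ then satisfies the elimination test and leaves $S_\player$. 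I would also note that if $\arm$'s nearest competitor has already been eliminated, the remaining competitors are only farther away, which can make elimination occur earlier and is therefore harmless for an upper bound.

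Finally I would convert $\error_t < \Delta_{\player,\arm}/4$ into an explicit bound on $t$: squaring yields $t > \frac{8}{\Delta_{\player,\arm}^2}\ln(4KNt^2/\delta)$. Resolving this self-referential inequality, where $t$ appears inside the logarithm, is the main obstacle. I would dispatch it with the standard linearization $\ln t \le \beta t + \ln(1/\beta)$ applied with $\beta = \Theta(\Delta_{\player,\arm}^2)$, which absorbs the $\ln t$ term into the left-hand side and leaves $t_{\player,\arm} = O\!\left(\frac{1}{\Delta_{\player,\arm}^2}\ln\frac{KN}{\delta\Delta_{\player,\arm}^2}\right)$. Since $\ln(KN/(\delta\Delta_{\player,\arm}^2))$ and $\ln(KN/(\delta\Delta_{\player,\arm}))$ differ only by a constant factor, this matches the stated $O(\ln(KN/(\delta\Delta_{\player,\arm}))/\Delta_{\player,\arm}^2)$, completing the bound on the clean event and hence with probability at least $1-\delta$.
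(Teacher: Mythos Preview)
Your proposal is correct and follows essentially the same route as the paper: establish the anytime clean event $\mathcal{E}$ via Hoeffding plus union bounds over pairs and rounds, show that on $\mathcal{E}$ the condition $4\error_t < \Delta_{\player,\arm}$ forces all competing intervals to separate from $C_{\player,\arm}$, and then invert that inequality in the style of \cite{even2006action}. The only cosmetic difference is that you spell out the linearization step for resolving the self-referential logarithm, whereas the paper simply defers to \cite{even2006action} at that point.
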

\begin{proof}[Proof Sketch]
Let $\mathcal{E}$ denote the event that, for all time steps $t$ the expected rewards $\expr_{p, \arm}$ lie in the confidence interval $CI_{p,\arm}(t)$ for every pair of player $p$ and arm $\arm$ i.e.  $\expr_{p, \arm}  - \error_t \leq \hat{\expr}_{p, \arm}(t) \leq \expr_{p, \arm}  + \error_t \; \forall \; t, \; \arm \in \setArms, \; \player \in \setPlayer$.  Using Hoeffding's inequality, we can show that the values $\error_t = \sqrt{ \frac{\ln{(4KNt^2/\delta)}}{2t}}$ in Algorithm \ref{alg:elim} correspond to bounds of
          any-time confidence intervals  $CI_{p,\arm}(t) = [ \hat{\expr}_{p, \arm}(t) - \error_t, \hat{\expr}_{p, \arm}(t) + \error_t]$. Consequently, $\mathcal{E}$ is true w.p.a. $1-\delta$.

          Now assume $\mathcal{E}$ is true. For a player $p$ consider two arms $\arm$, $\arm'$ with $\mu_{p,a} > \mu_{p,\arm'}$. Under the event $\mathcal{E}$ and for $t$ such that $\Delta_{\player,\arm, \arm'} > 4 \error_t$, the arms will have no overlapping confidence intervals as $(\hat{\mu}_{\player,\arm}-\error_t) - (\hat{\mu}_{\player,\arm'}+\error_t) \geq \mu_{\player,\arm}-\mu_{\player,\arm}  -4 \error_t  > 0$. So in order for an arm $\arm$ to have no overlapping confidence intervals with any other arm we need $\Delta_{\player,\arm} = \min_{\arm' \in \setArms \setminus \{\arm\}} \Delta_{\player, \arm, \arm'} > 4 \error_t$, which holds for 
          $t_{p,a} = O\left(\ln{\left(KN/\delta \Delta_{\player,\arm} \right)}/\Delta_{\player,\arm}^2\right)$ using a similar analysis with \cite{even2006action}.
          
          Consequently, after $t_{\player,\arm}$ samples of all arms, no confidence interval $CI_{p,\arm'}(t) \forall \arm' \in \setArms \setminus \{\arm\}$ overlaps with $CI_{p,\arm}(t)$ and $\arm$ is eliminated from $S_\player$, with probability at least $1 - \delta$.
\end{proof}
We can now state the sample complexity and show Algorithm~\ref{alg:elim} is a $\delta$-PCOS algorithm, with a detailed proof in Appendix~\ref{apnd:proof_elim}.

\begin{theorem}
\label{th:elim_algo}
Let $\mathcal{P}(S) = \{r \subseteq S: \Delta(G(S)) - \Delta(G(S \setminus r)) = 1\}$ denote the set of pairs that we have to eliminate to reduce the degree of the graph $G(S)$ with $S=\{(p,a) \in \setPlayer \times \setArms\}$. Let also $r_0 = \emptyset$ and 
\begin{equation*}
    r_i =  \arg\min_{r \in \mathcal{P}(S \setminus \bigcup_{j<i} r_j)} \max_{(p,a) \in r}  t_{p,\arm} \; \forall i=1, \dots, K
\end{equation*}
Algorithm~\ref{alg:elim} is a $\delta$-PCOS algorithm, and with probability at least $1-\delta$, the number of matching samples is bounded by:
\begin{equation}
  O\left(\sum_{s=1}^{K} \max_{(p,a) \in r_s} t_{p,\arm} \right)
\end{equation}
\end{theorem}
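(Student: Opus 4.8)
The plan is to prove the two claims --- that Algorithm~\ref{alg:elim} is $\delta$-PCOS and that its matching count obeys the stated bound --- together, both conditioned on the good event $\mathcal{E}$ from the proof of Lemma~\ref{Lemma:informal_t}, namely that every true mean $\expr_{\player,\arm}$ lies in its anytime interval $CI_{\player,\arm}(t)$ for all rounds $t$; this event has probability at least $1-\delta$. For correctness I would show that on $\mathcal{E}$ the preferences returned in line~12 are completely correct: whenever line~9 drops $\arm$ from $S_\player$, the interval $C_{\player,\arm}$ is disjoint from every other arm's interval, and since on $\mathcal{E}$ each interval contains its true mean, this disjointness forces the estimated order of $\arm$ against every other arm to agree with the order of the true means. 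Because the loop halts only when $S_\player=\emptyset$ for all $\player$, every pairwise comparison is resolved correctly, so $\hat{\pref}_\player=\pref_\player$ for all $\player$, and Proposition~\ref{pr:probability-of-stable-matching-bound} then yields $DA(\{\hat{\pref}_p\},\{\pref_a\})=\smatching^{\star}$.

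For the sample complexity, the number of matchings is exactly $\sum_t \mathit{deg}_t$, where $\mathit{deg}_t=\Delta(G(E_t))$ is the size of the minimal matching cover sampled in round $t$ and $E_t$ is the set of still-available pairs. By Lemma~\ref{Lemma:informal_t}, on $\mathcal{E}$ the pair $(\player,\arm)$ leaves its available set within $t_{\player,\arm}$ rounds, so $E_t\subseteq\{(\player,\arm):t_{\player,\arm}\ge t\}$ and hence $\mathit{deg}_t\le\Delta(G(\{(\player,\arm):t_{\player,\arm}\ge t\}))$. Writing the degree as $\mathit{deg}_t=\sum_{s=1}^{K}\mathbf{1}[\mathit{deg}_t\ge s]$ and swapping the order of summation rewrites the total as $\sum_{s=1}^{K}$ of the last round in which the maximal degree is at least $s$. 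I would then identify this per-level quantity with the greedy cost $\max_{(p,a)\in r_{K-s+1}}t_{\player,\arm}$ of lowering the maximal degree from $s$ to $s-1$; re-indexing $i=K-s+1$ produces the claimed $O(\sum_{s=1}^{K}\max_{(p,a)\in r_s}t_{\player,\arm})$, and since everything is argued on $\mathcal{E}$ this holds with probability at least $1-\delta$, simultaneously with correctness.

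The crux, and the step I expect to be the main obstacle, is this identification, because the algorithm eliminates pairs in increasing order of $t_{\player,\arm}$ rather than in the level-by-level order $r_1,r_2,\dots$ prescribed by the greedy, so one cannot directly assert that all of $\bigcup_{j\le i}r_j$ is gone by round $\max_{(p,a)\in r_i}t_{\player,\arm}$. The key lemma I would prove is that the greedy bottlenecks $\max_{(p,a)\in r_i}t_{\player,\arm}$ are non-decreasing in $i$ and equal the threshold $\min\{\theta:\Delta(G(\{(\player,\arm):t_{\player,\arm}>\theta\}))\le K-i\}$: intuitively, reducing the maximal degree by $i$ levels forces the removal of enough cheap edges that the bottleneck is exactly the smallest $\theta$ whose surviving graph has degree at most $K-i$. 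Granting this, at round $\max_{(p,a)\in r_i}t_{\player,\arm}$ every pair in $\bigcup_{j\le i}r_j$ has already been eliminated (monotonicity guarantees each has elimination time at most this value), so the remaining graph has maximal degree at most $K-i$; this bounds the last round with degree at least $s$ by the matching greedy cost and closes the summation. Establishing the threshold identity and the monotonicity --- via an exchange argument on the bipartite degree structure, comparing the greedy's cumulative removals against removing all edges below a fixed threshold --- is where the real technical work lies.
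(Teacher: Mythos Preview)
Your proposal follows essentially the paper's approach: both condition on the anytime-confidence event $\mathcal{E}$, derive correctness from disjoint intervals pinning down each pairwise order, and bound the matching count by tracking how the maximal degree $\mathit{deg}_t$ of the remaining-pair graph falls. Your indicator-swap $\sum_t \mathit{deg}_t = \sum_{s}\lvert\{t:\mathit{deg}_t\ge s\}\rvert$ is the dual of the paper's phase telescoping $\sum_{s}(K-s+1)(t_s-t_{s-1})=\sum_s t_s$; both rest on the same degree-drop schedule $r_1,\dots,r_K$.

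Two small points of comparison. First, you correctly flag monotonicity of the greedy bottlenecks $t_s=\max_{(p,a)\in r_s}t_{p,\arm}$ as a lemma needing proof, whereas the paper simply writes $t_s-t_{s-1}$ and telescopes without ever justifying $t_s\ge t_{s-1}$; so on this score your plan is more careful than the paper itself. Second, your correctness sketch omits a step the paper does spell out: the final sort in line~12 uses $\hat{\expr}_{\player,\arm}$ values that are \emph{frozen} at each arm's elimination time, so one must argue that these frozen estimates still sort correctly after other arms receive further samples. The paper handles this by showing that, under $\mathcal{E}$ and because $\error_t$ is decreasing, any still-active arm's later estimate moves by at most $2\error_t$ from its value at the earlier elimination round, which is not enough to cross the $2\error_t$ gap established at elimination.
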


\begin{proof}[Proof Sketch]
    Using Hoeffding's inequality, we can show that the values $\error_t$ (in Algorithm\ref{alg:elim}) define any-time confidence intervals, i.e., the event $\mathcal{E}$, where at any time-step $t$ the expected rewards $\expr_{p, \arm}$ lie in the confidence interval $C_{p,\arm}(t)$ for every pair of player $p$ and arm $\arm$, holds with probability at least $1-\delta$.
          
    Under the event $\mathcal{E}$, once two confidence intervals are not overlapping for some arms $\arm, \arm'$, we can determine their relative order in player $p$'s preferences. This still holds true even when taking more samples of one of the arms, as the confidence intervals only shrinks. We can thus eliminate the pair $(p,\arm)$ once the order of the arm $\arm$ towards all other arms can be determined with high probability. Thus, upon termination of the algorithm, we will have determined the correct preferences $\hat{\pref}_p = \pref_p \forall \player \in\setPlayer$ and by Proposition~\ref{pr:probability-of-stable-matching-bound}, Algorithm~\ref{alg:elim} will output $m^{\star}_{s}$ with w.p.a. $1-\delta$.

    For the sample complexity, consider that at any time $t$ we sample every available pair, $\{(p,a) \in \setPlayer \times \setArms \mid a \in S_\player\}$, using the matchings from a minimal matching cover of the bipartite graph with edges corresponding to currently available pairs (see line 5). Such a minimal matching cover has $\mathit{deg}_t$ many matchings. Thus, once the maximal degree is reduced, less matchings have to be sampled. 

    We thus consider $s = 1, \dots, K$ phases where each phase corresponds to the elimination of the subset of pairs $r_s$ to reduce the degree of the graph. So under the event $\mathcal{E}$, we can use Lemma \ref{Lemma:informal_t} to define $r_s$ as the set of pairs with the lowest sample complexity i.e: $$r_i =  \arg\min_{r \in \mathcal{P}(S \setminus \bigcup_{j<i} r_j)} \max_{(p,a) \in r}  t_{p,\arm} \; \forall i=1, \dots, K$$
    with 
    $$\mathcal{P}(S) = \{r \subseteq S: \Delta(G(S)) - \Delta(G(S \setminus r)) = 1\}$$
    denote the subsets of pairs that can reduce the degree of the graph. Note that here the $r_1, \dots, r_K$ form a partition of all player-arm pairs. 
      
    In every phase $s$, we perform  $t_s - t_{s-1}$ iterations, where $t_s$ are sufficient number of steps to eliminate $r_s$ determined by the player-arm pair $(\player, \arm) \in r_s$ that takes the longest to eliminate i.e: $t_s = \max_{(p,a) \in r_s} t_{p,\arm}$ with $t_0 = 0$.
      
    Finally, in each step $t$, every matching cover consists of $K-s+1$ matchings. Thus, under the event $\mathcal{E}$ witch holds w.p.a. $1-\delta$, the total number of  matching samples is given by:
    \begin{equation*}
        \sum_{s=1}^{K} (K-s+1) (t_s-t_{s-1}) = \sum_{s=1}^{K} t_s = O\left(\sum_{s=1}^{K} \max_{(p,a) \in r_s} t_{p,\arm}\right)
    \end{equation*}
\end{proof}
\begin{Remark}
Note, that in the case where $N = K$, the set of pairs $r_s$ that we have to eliminate, corresponds to a perfect matching $m_s$ as in this case the graph of available pairs is regular at each phase $s$.
\end{Remark}
\begin{Remark}
\label{remark:Delta}
We can construct a worst-case instance, where Algorithm~\ref{alg:elim} uniformly samples all player-arm pairs. In particular, if the differences in the expected rewards are equal i.e. $\Delta_{p,a} = \Delta \; \forall p \in \setPlayer, a \in \setArms$, leading to  sample complexity:
\begin{equation*}
    O\left(\sum_{s=1}^{K} \max_{(p,a) \in r_s} t_{p,\arm} \right) = O\left(K \frac{\ln{\left(KN/\delta \Delta \right)}}{\Delta^2} \right).
\end{equation*}
\end{Remark}

\section{Improved Elimination Algorithm}
\label{sec:improved_elimination_algorithm}
In this section, we propose an improved version of the elimination algorithm, Algorithm~\ref{alg:imp_elim}, based on the observation that to identify the optimal stable matching, we only need to correctly estimate the preferences of the players up to the position of the matching partner in the optimal stable matching. We formalize this in the Lemma below.

\begin{Lemma}
\label{lemma:stopping_rule}
Let \(m_s^\star\) be the true optimal stable matching according to preferences $\pref$. The output,  \(m_{\hat{\pref}}\), of the DA algorithm using preferences $\hat{\pref}$ that are partially correct up to \(m_{\hat{\pref}}(p)\) for every player $p$, i.e., \(\hat{\pref}_{\player}[i] = \pref_{\player}[i] \; \forall \; i \leq \rank_p[m_{\hat{\pref}}(p)] \; \forall \; \player \in \setPlayer\),
is equal to the true optimal stable matching \(m_s^\star\).
\end{Lemma}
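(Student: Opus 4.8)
The plan is to prove the stronger statement that running the Deferred Acceptance (DA) algorithm on $\hat{\pref}$ reproduces, proposal for proposal, the execution of DA on the true preferences $\pref$, so that the two produce literally the same matching; combined with the correctness of DA on correct preferences (Proposition~\ref{pr:probability-of-stable-matching-bound}), this gives $m_{\hat{\pref}} = \smatching^{\star}$. The one structural fact about player-proposing DA that drives everything is that each player proposes to arms in strictly decreasing order of preference and ends matched to the \emph{last} arm it proposed to; hence, in the run on $\hat{\pref}$, a player $\player$ only ever proposes to arms ranked at least as high as its final partner $m_{\hat{\pref}}(\player)$, i.e.\ to arms in the prefix of $\hat{\pref}_\player$ of length $\rank_\player[m_{\hat{\pref}}(\player)]$. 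This is precisely the prefix on which the hypothesis guarantees $\hat{\pref}_\player$ and $\pref_\player$ agree.

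First I would fix a single deterministic proposal order (say, at each step the lowest-indexed currently unmatched player with remaining arms proposes) and run DA under $\hat{\pref}$ and under $\pref$ in lock-step. I would then prove by induction on the proposal steps the invariant that, at every step, the two runs are in an identical state (same tentative matching, and each player having proposed to the same number of arms) and every proposal so far has fallen inside the agreed prefix of the proposing player. The inductive step is where the argument really lives: identical states select the same player $\player$; by the invariant $\player$ has already proposed to the same $j-1$ arms in both runs, so it now proposes to the arm in position $j$ of its list; since the run on $\hat{\pref}$ is a fixed process in which $\player$'s proposals never exceed position $\rank_\player[m_{\hat{\pref}}(\player)]$, we have $j \le \rank_\player[m_{\hat{\pref}}(\player)]$, and on this prefix $\hat{\pref}_\player[j] = \pref_\player[j]$, so the proposed arm is identical; finally, since arm preferences are fixed and known (hence the same in both runs) and the tentative holder is the same, the arm's accept/reject decision is identical and the state updates identically.

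Once the invariant is carried through termination, the two runs output the same matching, so $m_{\hat{\pref}} = \mathrm{DA}(\pref) = \smatching^{\star}$ by Proposition~\ref{pr:probability-of-stable-matching-bound}. The main obstacle is the apparent circularity of the hypothesis: the prefix on which preferences must agree is itself defined through the unknown output $m_{\hat{\pref}}(\player)$. The induction resolves this by treating the run on $\hat{\pref}$ as a self-contained reference process, whose proposal pattern --- a prefix ending exactly at $m_{\hat{\pref}}(\player)$ --- is an unconditional property of DA, and then transporting this bound to the $\pref$-run through the coupling, so that no circularity actually arises. A minor point to handle cleanly is that the conclusion uses the order-independence of DA (McVitie--Wilson), which guarantees that the particular proposal order we fixed still yields the player-optimal stable matching $\smatching^{\star}$ on the true preferences; alternatively one can bypass the coupling and verify directly that $m_{\hat{\pref}}$ has no blocking pair under $\pref$ (the blocking conditions for pairs involving arms in the agreed prefix coincide under $\hat{\pref}$ and $\pref$), though that second route additionally requires showing that $\smatching^{\star}$ is itself stable under $\hat{\pref}$ to recover optimality.
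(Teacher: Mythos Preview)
Your proof is correct and follows essentially the same idea as the paper's: player-proposing DA only ever issues proposals within the prefix of each player's list ending at the final partner, so agreement on that prefix makes the $\hat{\pref}$-run and the $\pref$-run coincide step for step, yielding $m_{\hat{\pref}} = \smatching^{\star}$. The paper's own proof is a three-sentence sketch of this observation, whereas you supply the explicit coupling, the inductive invariant, the appeal to McVitie--Wilson order-independence, and a clean resolution of the apparent circularity in the hypothesis---all of which the paper leaves implicit.
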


\begin{proof}
The DA algorithm sequentially executes proposals from players starting with their most preferred arm and never backtracks to a previously made proposal. Further, it halts once the player optimal stable matching has been found. Consequently it only considers the matching partner \( m_{\hat{\pi}}(p) \) and all higher ranked arms in the optimal stable matching according to $\hat{\pi}$. So if the preferences $\pref_{\player}$ are correct up to the positions of the $m_{\hat{\pi}}$ then is equal to the true optimal stable matching \(m_s^\star\).
\end{proof}
We can modify the stopping criteria of the Elimination Algorithm of the previous section according to Lemma~\ref{lemma:stopping_rule}. Algorithm~\ref{alg:imp_elim} terminates when it eliminates the arms up to the stable matching partner for every player. Specifically, after each round $t$, we calculate an estimate of the player optimal stable matching $\hat{m}_t$ from the DA algorithm using the estimated preferences $\hat{\pi}$ from the sample means. The algorithm terminates if for all players $p\in \setPlayer$, every the stable matching partner $\hat{m}_t(p)$ and all higher ranked arms have been eliminated.  As the algorithm proceeds the stable matching $\hat{m}_t$ changes until we eventually can reach a state where our termination criteria holds.

\begin{algorithm}
\caption{Improved Elimination Algorithm}
\label{alg:imp_elim}
\begin{algorithmic}[1]
\Require $\delta > 0$, $\setPlayer$, $\setArms$,  $\{\pref_a\}_{\arm \in \setArms}$
\State $t=1$, $S_\player = \setArms \; \forall \player \in \setPlayer$, $S=S_\player$
\State $\hat{\expr}_{\player,\arm}(t) = 0 \; \forall p \in \setPlayer, a  \in \setArms$
\While{$\mid S \mid  \geq 1$}
\State \textit{Sample all matchings $m \in \mathfrak{M} \left(\{(p,a) \in \setPlayer \times \setArms \mid a \in S_\player\}\right)$}
\State \textit{Update $\hat{\expr}_{\player,\arm}(t)$ $\forall \player \in \setPlayer$ and $\arm  \in S_{\player}$}
\State $\error_t = \sqrt{ \frac{\ln{4KNt^2/\delta}}{2t}}$
\State $C_{\player,\arm} = [\hat{\expr}_{\player, \arm}(t) \pm \error_t] \; \forall \arm  \in S_{\player}, \player \in \setPlayer$ 
\State $S_\player = S_\player \setminus \{ \arm : C_{\player, \arm} \cap C_{\player, j} = \emptyset \; \forall j \in \setArms \setminus \{\arm\} \}$
\State $\hat{\pref}_{\player} = \argsort_{\arm \in \setArms} \hat{\expr}_{\player,\arm} \; \forall \player \in \setPlayer$
\State $m_t = DA(\{\hat{\pref}_p\}_{\player \in\setPlayer}, \{\pref_a\}_{\arm \in \setArms})$
\State $S = \cup_{\player \in \setPlayer} \{a \in S_\player: a \succeq_{\hat{\pref}_{p}} m_t(p) \} $

\State $t = t+1$
\EndWhile
\State \Return $m_t$ 
\end{algorithmic}
\end{algorithm}

\begin{theorem}
\label{th:imp_elim}
Let $\mathcal{P}(S) = \{r \subseteq S: \Delta(G(S)) - \Delta(G(S \setminus r)) = 1\}$ denote the set of pairs that we have to eliminate to reduce the degree of the graph $G(S)$ with $S=\{(p,a) \in \setPlayer \times \setArms\}$, $r_0 = \emptyset$ and 
\begin{equation*}
    r_i =  \arg\min_{r \in \mathcal{P}(S \setminus \bigcup_{j<i} r_j)} \max_{(p,a) \in r}  t_{p,\arm} \; \forall i=1, \dots, K
\end{equation*}
Let also $t_{\max} = \max_{(p,a)\in \setPlayer\times\setArms: a \succeq_p m_s^\star(p)} t_{p,\arm}$, and $n \in \{1, \dots ,K\}$ be the index s.t. $\max_{(p,a) \in r_{n-1}} t_{p,\arm} \leq t_{\max} \leq \max_{(p,a) \in r_n} t_{p,\arm}$.

Algorithm~\ref{alg:imp_elim} is a $\delta$-PCOS algorithm, and with probability at least $1-\delta$, the number of matching samples is bounded by:
\begin{equation}
  O\left(\sum_{s=1}^{n-1} \max_{(p,a) \in r_s} t_{p,\arm} + (K-n+1) t_{\max} \right)
\end{equation}
\end{theorem}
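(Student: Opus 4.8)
The plan is to follow the two-part template of the proof of Theorem~\ref{th:elim_algo}, establishing first that Algorithm~\ref{alg:imp_elim} is $\delta$-PCOS and then bounding the number of sampled matchings by the same phase-based counting, the only difference being that the phases are now truncated at the earlier termination time induced by the modified stopping rule. Throughout I would condition on the good event $\mathcal{E}$ that for every pair $(\player,\arm)$ and every round $t$ the true mean $\expr_{\player,\arm}$ lies in $[\hat{\expr}_{\player,\arm}(t)\pm\error_t]$; as recalled in Lemma~\ref{Lemma:informal_t}, the choice $\error_t=\sqrt{\ln(4KNt^2/\delta)/(2t)}$ together with Hoeffding's inequality and a union bound gives $\Pr(\mathcal{E})\ge 1-\delta$. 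On $\mathcal{E}$ every elimination is order-correct: when $\arm$ leaves $S_\player$ its rank relative to all other arms is fixed correctly.

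Correctness is immediate from Lemma~\ref{lemma:stopping_rule}. When the loop exits we have $S=\emptyset$, so for every player $\player$ all arms ranked at or above $m_t(\player)$ in $\hat{\pref}_\player$ have already been removed from $S_\player$ and are therefore (on $\mathcal{E}$) correctly ordered; that is, $\hat{\pref}_\player$ is partially correct up to $m_t(\player)$. Since $m_t$ is exactly the DA output on $\hat{\pref}$, Lemma~\ref{lemma:stopping_rule} applies verbatim and yields $m_t=m_s^\star$. As $\mathcal{E}$ holds with probability at least $1-\delta$, this gives the $\delta$-PCOS guarantee.

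For the sample complexity I would reuse the phase decomposition $r_1,\dots,r_K$ of Theorem~\ref{th:elim_algo}: phase $s$ reduces the degree of the availability graph by one, completes (on $\mathcal{E}$, by Lemma~\ref{Lemma:informal_t}) after $t_s=\max_{(\player,\arm)\in r_s}t_{\player,\arm}$ rounds, and during each of its rounds a minimal matching cover uses $K-s+1$ matchings; moreover the greedy construction makes $t_1\le\cdots\le t_K$. The new ingredient is the termination time. On $\mathcal{E}$, every pair $(\player,\arm)$ with $\arm\succeq_\player m_s^\star(\player)$ is eliminated after at most $t_{\player,\arm}\le t_{\max}$ rounds, and (by the sub-claim below) once all such pairs are gone the estimate satisfies $m_t=m_s^\star$, so the stopping set $S$ is empty; hence the algorithm halts by round $t_{\max}$. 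Splitting the run into the fully completed phases $s=1,\dots,n-1$ and the partial phase $n$ (degree $K-n+1$, running only from round $t_{n-1}$ to $t_{\max}$) gives a matching count of
\[
\sum_{s=1}^{n-1}(K-s+1)(t_s-t_{s-1})+(K-n+1)(t_{\max}-t_{n-1}),
\]
and the same summation-by-parts telescoping used for Theorem~\ref{th:elim_algo} (each coefficient $K-s+1$ dropping by one per phase) collapses the first sum to $\sum_{s=1}^{n-2}t_s+(K-n+2)t_{n-1}$, which after adding the partial phase reduces to $\sum_{s=1}^{n-1}t_s+(K-n+1)t_{\max}$, the claimed bound.

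The main obstacle is the adaptivity of the stopping rule: it inspects arms above the \emph{current estimate} $m_t(\player)$ rather than above the unknown true partner $m_s^\star(\player)$, and $m_t$ may still fluctuate while pairs ranked below $m_s^\star(\player)$ are unresolved. The crux is therefore the sub-claim that, on $\mathcal{E}$, as soon as all pairs $(\player,\arm)$ with $\arm\succeq_\player m_s^\star(\player)$ are eliminated one has $m_t=m_s^\star$, no matter how the remaining arms are ordered. I would prove this in two steps. First, $m_s^\star$ is stable with respect to $\hat{\pref}$: any $\hat{\pref}$-blocking pair $(\player,\arm)$ would need $\arm\succ_{\hat{\pref}_\player}m_s^\star(\player)$, placing $\arm$ in the correctly ordered top segment of $\hat{\pref}_\player$, hence $\arm\succ_{\pref_\player}m_s^\star(\player)$, so $(\player,\arm)$ would also block $m_s^\star$ under the true preferences, contradicting its stability. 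Second, player-optimality of $m_t$ under $\hat{\pref}$ gives $m_t(\player)\succeq_{\hat{\pref}_\player}m_s^\star(\player)$, which lands $m_t(\player)$ inside the correctly ordered segment; thus $\hat{\pref}_\player$ is partially correct up to $m_t(\player)$ and Lemma~\ref{lemma:stopping_rule} forces $m_t=m_s^\star$. Pinning down this equivalence is what ties the termination round to $t_{\max}$ and justifies truncating the phase sum at index $n$.
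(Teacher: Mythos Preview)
Your proposal is correct and follows essentially the same approach as the paper: condition on the good event $\mathcal{E}$, invoke Lemma~\ref{lemma:stopping_rule} for correctness, and reuse the phase decomposition of Theorem~\ref{th:elim_algo} truncated at round $t_{\max}$ for the sample-complexity count. Your telescoping of the matching count is in fact cleaner (and arguably more accurate) than the paper's presentation. The one place you go beyond the paper is the explicit sub-claim that, once all pairs $(\player,\arm)$ with $\arm\succeq_\player m_s^\star(\player)$ are eliminated, the DA estimate $m_t$ equals $m_s^\star$; the paper simply asserts this by appeal to Lemma~\ref{lemma:stopping_rule}, whereas you supply the missing stability-plus-optimality argument (showing $m_s^\star$ is stable under $\hat{\pref}$, then using player-optimality of $m_t$ to land it in the correctly ordered segment). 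This is a genuine improvement in rigor over the paper's treatment, not a different route.
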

\begin{proof}
    Algorithm~\ref{alg:imp_elim} is essentially the same as Algorithm~\ref{alg:elim} with a different stopping criterion (see lines 3 and 11), and many of arguments from the proof of Theorem~\ref{th:elim_algo} transfer. In particular, at the time of elimination of an arm $a$ from $S_\player$, the arm can be correctly ordered w.r.t. all other arms in $\player$'s preferences with high probability. Thus at the time of termination, with probability at least $1-\delta$, the condition of Lemma~\ref{lemma:stopping_rule} is satisfied and Algorithm~\ref{alg:imp_elim} outputs the correct optimal stable matching $m_s^\star$.

    For the sample complexity, we can again consider phases  $s$ where we eliminate a subset of pairs $r_s$ after at most $t_s = \max_{(p,a) \in r_s} t_{p,\arm}$ number of samples, under the event $\mathcal{E}$. However, with the stopping criterion in Algorithm~\ref{alg:imp_elim}, we might not have to eliminate every $r_1, \dots, r_K$ before stopping. In fact, by Lemma~\ref{lemma:stopping_rule} we only need to eliminate the optimal matching partner and all higher ranked arms for every player i.e. the pairs $\{(p,a) \in \setPlayer \times \setArms : a \succeq_p m_s^\star(p)\}$. For this, a maximal number of samples $t_{\max}$ from every pair are needed.
    
    Now, consider the last phase $n$ in which all arms $a \succeq_p m_s^\star(p)$ are either eliminated from $S_\player$ or sampled $t_{\max}$ many times, i.e., $t_{n-1}\leq t_{\max} \leq t_{n}$. The algorithm terminates within phase $n$, with a total number of matching: $\sum_{s=1}^{n-1} (K-s+1) (t_s - t_{s-1}) + (K-n+1)(t_{\max} - t_{n-1}) = \sum_{s=1}^{n-1} (K-s+1) t_s + (K-n+1) t_{\max}$. To conclude, under the event $\mathcal{E}$ witch holds w.p.a. $1-\delta$, the algorithm terminates with the total number of matchings bounded by:
    \begin{align*}
    O\left(\sum_{s=1}^{n-1} \max_{(p,a) \in r_s} t_{p,\arm} + (K-n+1) t_{\max} \right)
    \end{align*}
\end{proof}

\begin{Remark}
\label{remark:Delta2}
We can construct a worst-case instance, where Algorithm~\ref{alg:imp_elim} uniformly samples all player-arm pairs. In particular, if the differences in the expected rewards are equal i.e. $\Delta_{p,a} = \Delta \; \forall p \in \setPlayer, a \in \setArms$, leading to  sample complexity:
\begin{equation*}
     O\left(K \frac{\ln{\left(KN/\delta \Delta \right)}}{\Delta^2} \right).
\end{equation*}
\end{Remark}

As for Algorithm~\ref{alg:elim}, Algorithm~\ref{alg:imp_elim} will, in the worst case, have the same sample complexity as a strategy that uniformly samples pairs until they are eliminated, e.g., when all expected reward gaps are equal, i.e., $\Delta_{p,a} = \Delta \; \forall p \in \setPlayer, a \in \setArms$ (see also Remark \ref{remark:Delta}). However, in practice, we expect Algorithm~\ref{alg:imp_elim} to perform much better and terminate earlier, particularly when the reward gaps are larger for higher-ranked consecutive arm pairs. 

\section{Adaptive Sampling Algorithm}
\label{sec:adapt_sampling}
The algorithms introduced in the previous sections uniformly sample the arms for each player, until their confidence intervals are sufficiently separated. Here we propose an approach to adaptively sample player-arm pairs at every round. 

Algorithm~\ref{alg:adap_elim}, leverages insights from Lemma~\ref{lemma:stopping_rule} to dynamically define the set of arms requiring further exploration, denoted as $S_\player$. Specifically, in each round $t$, we estimate the optimal stable matching $\hat{m}_t$  from the empirical preferences $\hat{\pref}$. According to Lemma~\ref{lemma:stopping_rule}, for this matching to be correct, the arms must be accurately ranked for the players up to the partner in the optimal stable matching. Consequently, in each round $ t $ and for each player $\player$, we sample the set of arms in order to distinguish the confidence intervals of the arms up to the stable matching partner $m_t(p)$ (see line 11). Due to the adaptive selection of arms, each arm has a distinct confidence margin $ \error_{p,a} $ (line 5), which varies based on the number of times we sample a player-arm pair $ t_{p,a} $.

\begin{algorithm}
\caption{Adaptive Sampling Algorithm}
\label{alg:adap_elim}
\begin{algorithmic}[1]
\Require $\delta > 0$, $\setPlayer$, $\setArms$, $\{\pref_a\}_{\arm \in \setArms}$
\State $t_{p,a}=0$, $S_\player = \setArms \; \forall \player \in \setPlayer$
\State $\hat{\expr}_{\player,\arm}(t) = 0 \; \forall p \in \setPlayer, a  \in \setArms$
\While{$\mid \cup_{\player \in \setPlayer} S_\player \mid  \geq 1$}
\State \textit{Sample all matchings $m \in \mathfrak{M} \left(\{(p,a) \in \setPlayer \times \setArms \mid a \in S_\player\}\right)$}
\State \textit{Update $\hat{\expr}_{\player,\arm}(t)$ and $t_{p,a}$ $\forall \player \in \setPlayer$ and $\arm  \in S_{\player}$}
\State $\error_{p,a} = \sqrt{ \frac{\ln{4KNt_{p,a}^2/\delta}}{2t_{p,a}}} \forall \arm  \in \setArms, \player \in \setPlayer$
\State $C_{\player,\arm} = [\hat{\expr}_{\player, \arm} \pm \error_{p,a}] \; \forall \arm  \in \setArms, \player \in \setPlayer$ 
\State $\hat{\pref}_{\player} = \argsort_{\arm \in \setArms} \hat{\expr}_{\player,\arm} \; \forall \player \in \setPlayer$
\State $m_t = DA(\{\hat{\pref}_p\}_{\player \in\setPlayer}, \{\pref_a\}_{\arm \in \setArms})$
\State $A[\player] = \{\arm\in\setArms : \arm 
\succeq_{\hat{\pref}_{p}}m_t(p)\} \forall p \in \setPlayer$
\State $S_\player = \{\arm \in \setArms \mid \exists \arm' \in \setArms \setminus \{a\} \colon $
\Statex \hspace{1.5cm}  $C_{\player, \arm'} \cap C_{\player, \arm} \neq \emptyset \text{ and } \{a, a'\} \cap A[\player] \neq \emptyset\} \forall \player \in \setPlayer$
\State $t = t+1$
\EndWhile
\State \Return $m_t$ 
\end{algorithmic}
\end{algorithm}

\begin{theorem}
\label{th:adap_elim}
 Algorithm~\ref{alg:adap_elim} is a $\delta$-PCOS algorithm.
\end{theorem}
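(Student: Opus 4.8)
The plan is to follow the template of the proof of Theorem~\ref{th:elim_algo}, replacing the round-uniform confidence radius $\error_t$ by the per-pair radius $\error_{p,a}$, and then to treat separately the correctness of the adaptive stopping rule and the (less obvious) termination of the loop. First I would establish the good event. Applying Hoeffding's inequality to the $t_{p,a}$ i.i.d.\ samples of a fixed pair $(p,a)$, for any fixed sample count $n$ the empirical mean deviates from $\expr_{p,a}$ by more than $\error_{p,a}=\sqrt{\ln(4KNn^2/\delta)/(2n)}$ with probability at most $\delta/(2KNn^2)$. Summing over $n\ge 1$ (using $\sum_n n^{-2}=\pi^2/6<2$) and union-bounding over all $KN$ pairs shows that the event $\mathcal{E}$, that $\expr_{p,a}\in C_{p,a}$ for every pair at every round, holds with probability at least $1-\delta$; this is exactly the anytime-confidence-interval argument of Lemma~\ref{Lemma:informal_t}, now stated per pair. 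Under $\mathcal{E}$, two disjoint intervals with $C_{p,a}\cap C_{p,a'}=\emptyset$ certify the true order of $a,a'$ in $\pref_p$, since $\expr_{p,a}$ and $\expr_{p,a'}$ lie in their respective intervals, so the empirical order matches the true one.

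Next I would analyze the stopping condition. The loop exits when $\bigcup_{\player} S_\player=\emptyset$, i.e.\ $S_\player=\emptyset$ for every $\player$. Reading the definition of $S_\player$ in line~11, $S_\player=\emptyset$ means that no overlapping pair $\{a,a'\}$ meets $A[\player]$; equivalently, every arm $b\in A[\player]=\{a:a\succeq_{\hat{\pref}_p} m_t(p)\}$ has a confidence interval disjoint from that of every other arm. Combined with the order-certification property above, the arms of $A[\player]$ occupy the same top positions, in the same order, in $\hat{\pref}_\player$ and in $\pref_\player$; in particular $\hat{\pref}_{\player}[i]=\pref_{\player}[i]$ for all $i$ up to the position of $m_t(\player)$, for every player. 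Hence at termination the estimated preferences are partially correct up to $m_t(\player)=m_{\hat{\pref}}(\player)$, and Lemma~\ref{lemma:stopping_rule} yields $m_t=m_s^\star$. This establishes correctness under $\mathcal{E}$, and therefore with probability at least $1-\delta$.

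Finally, and this is the step I expect to be the main obstacle, I would argue that the loop actually terminates, which is subtler here than for Algorithm~\ref{alg:elim} because both the sampled set $S_\player$ and the target set $A[\player]$ change adaptively as $\hat m_t$ moves. Since all gaps satisfy $\Delta_{p,a,a'}>0$, any pair sampled infinitely often has $\error_{p,a}\to 0$, so its interval shrinks to the point $\expr_{p,a}$. The key structural observation is that whenever an arm $a$ lies in $S_\player$ because of an overlap with a witness $a'$ (with at least one of them in $A[\player]$), the witness $a'$ also lies in $S_\player$, so both are sampled in that round. Assuming for contradiction that the loop never halts, some arm is sampled infinitely often, and by finiteness of $\setArms$ it shares a fixed witness $a'$ that is also sampled infinitely often; but then both intervals collapse to the distinct points $\expr_{p,a}\neq\expr_{p,a'}$ and can no longer overlap, contradicting the persistence of the overlap.

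Thus under $\mathcal{E}$ every pair is sampled only finitely often, and since each round samples at least one pair, the loop halts after finitely many rounds. Together with the correctness argument, this shows that with probability at least $1-\delta$ Algorithm~\ref{alg:adap_elim} terminates and returns $m_s^\star$, i.e.\ it is a $\delta$-PCOS algorithm.
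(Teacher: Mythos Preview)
Your proof is correct and follows essentially the same route as the paper's: establish the anytime good event $\mathcal{E}$ via the per-pair Hoeffding plus union bound, verify that termination under $\mathcal{E}$ forces the partially-correct-preferences hypothesis of Lemma~\ref{lemma:stopping_rule}, and argue that the loop halts. Your pigeonhole-based termination argument (an arm sampled infinitely often must share a recurring witness that, by symmetry of line~11, is also sampled infinitely often, so both intervals collapse to the distinct points $\expr_{p,a}\neq\expr_{p,a'}$ and can no longer overlap) is in fact more careful than the paper's, which simply asserts that ``since our confidence intervals only shrink, our algorithm will eventually terminate.''
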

\begin{proof} 
First, note that, as shown for Algorithm~\ref{alg:elim} and~\ref{alg:imp_elim}, the $\error_{p,a}$ define anytime confidence intervals, i.e., the event $\mathcal{E}$ — where the expected reward lies within the confidence intervals at any time $t$ for all pairs— holds with probability at least $1 - \delta$.

So under $\mathcal{E}$, consider a round where the estimate of the stable matching $m_t$ is incorrect, witch occurs only when at least one player $\player$ has wrong preferences $ \hat{\pref}_p $ up to the stable match. This implies that at least one arm $\arm$ with $ \arm \succeq_{\hat{\pref}_p} m_t(p) $ has an overlapping confidence interval with another arm $a'$, defining the active set of arms in round $ t $, e.g., $S_\player=\{a, a'\}$. In this situation, the algorithm samples the arms in the active set $S_\player$ until their confidence intervals are sufficiently separated or until we find ourselves with a different matching $ m_{t'} $ and a new active set of arms. Since our confidence intervals only shrink, our algorithm will eventually terminate with the correct preferences up to the stable match, and thus with the correct optimal stable matching with probability at least $1-\delta$.
\end{proof}

The adaptive selection of agent pairs complicates the analysis of sample complexity, which we leave for future work. Instead, we test the algorithms in practice in the following section.
\section{Simulations}
\label{sec:simulations}

\begin{figure}[b]
    \centering
    \includegraphics[scale=0.65]{./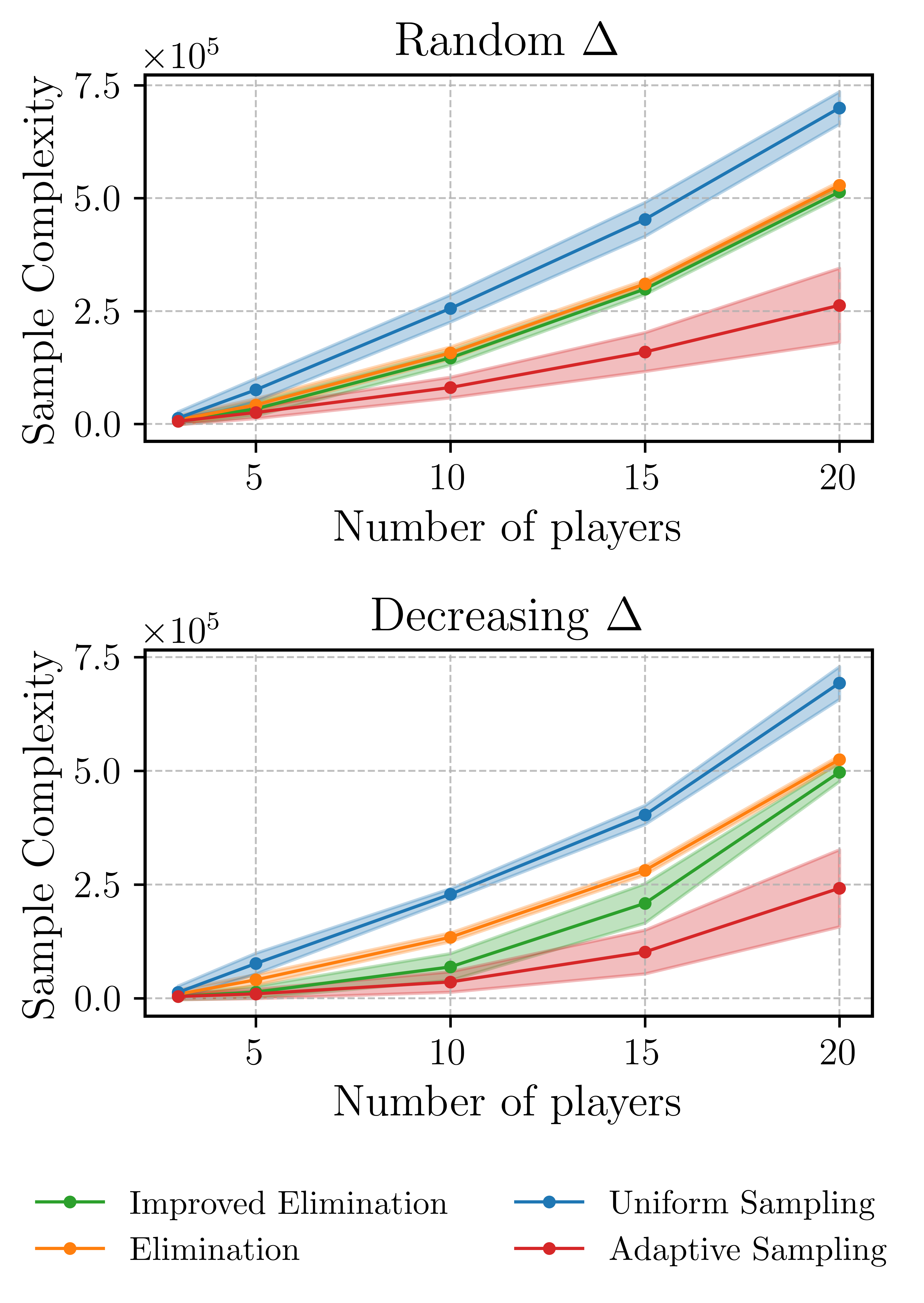}
    \caption{Sample complexity for the proposed algorithms for the two different reward settings, averaged over the runs.}
\label{fig:exp_2}
\end{figure}

\begin{figure*}[ht]
    \centering
    \includegraphics[scale=0.56]{./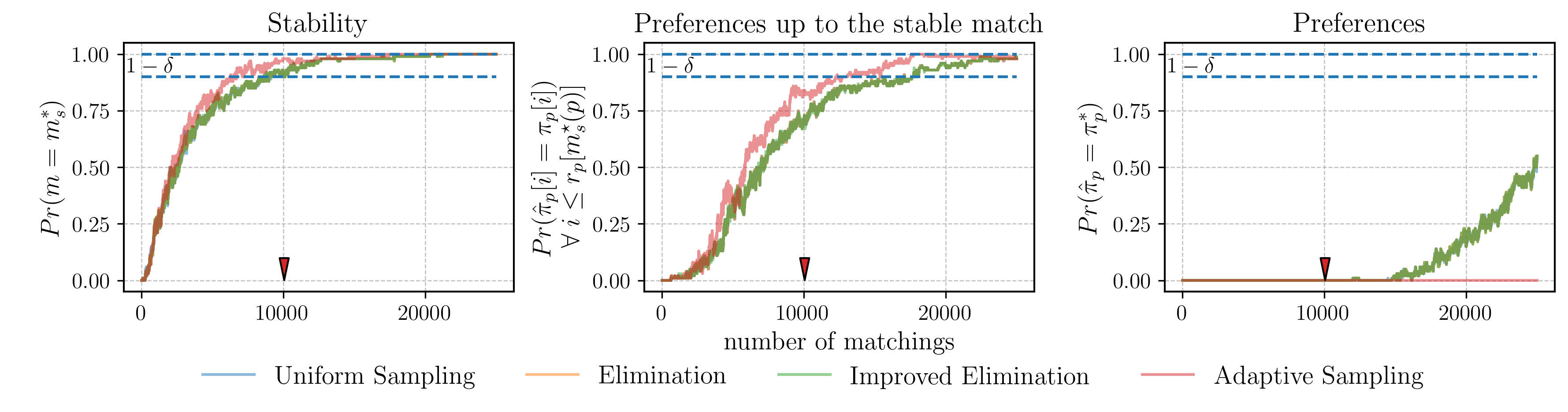}
    \caption{Any-time performance of the algorithm for the first instance with 20 agents on each side. The figure illustrates the average number of times the algorithms are able to identify (left) the optimal stable matching, (middle) the correct preferences up to the stable match for every player, and (right) the correct preferences for every player, after each matching.}
\label{fig:actual_performance}
\end{figure*}

In this section, we perform simulations on random instances to further evaluate the performance of our algorithms for fixed $\delta=0.1$. The rewards for each pair are drawn from a Bernoulli distribution, i.e.,  \(X_{p,a} \sim \text{Bern}(\mu_{p,a})\), with $\mu_{p,a} = \mathbb{E}[X_{p,a}]$. We perform different experiments by varying the number of agents with \(N = K\), while we explore two different reward settings: (1) random expected rewards, and (2) random expected rewards with decreasing gaps. For each experiment, we generate 100 random instances. The code for this work is available at \url{https://github.com/a-athanasopoulos/PACOS}.

In both reward settings, we randomly generate the preferences of the agents \(\pref\) for each instance. To create the respective expected rewards $\mu$ for the pairs, we first sample $K-1$ arbitrary reward gaps $\Delta_{\player,i,i+1}$ for each player $\player$ and $i=1, \dots, K-1$ from a Dirichlet distribution \(\text{Dir}(\alpha = 1)\), while we set $\Delta_{p,0,1}=0$. For computational reasons, we normalize these values $\Delta_{\player,i,i+1}$ to ensure that none exceed 0.05. In \textbf{Reward Setting 1}, we set the expected rewards to $\mu_{p,\pi_{p}[i]} = \sum_{j\leq K-i} \Delta_{\player,j,j+1}$, while in \textbf{Reward Setting 2}, we first sort the reward gaps in increasing order. The second setting ensures reduced sample complexity for the algorithms that employs the stopping rule according to our Lemma~\ref{lemma:stopping_rule}, as the preferences up to the stable match are easier to distinguish, i.e., $\Delta_{\player, \pref[i], \pref[i+1]} \geq \Delta_{\player, \pref[i+1], \pref[i+2]} \; \forall i=1, \dots, K-1$.

We compare the \textbf{(1) Elimination Algorithm}, \textbf{(2) Improved Elimination Algorithm}, and \textbf{(3) Adaptive Sampling Strategy}. In addition, we consider a variant of the NUE algorithm, the \textbf{(4) Uniform Sampling Strategy} that uniformly samples every pair until there are no overlapping confidence intervals, as the algorithm introduced in Section~\ref{sec:etc} requires knowledge of the minimum reward difference \(\Delta\) and has a fixed sample complexity.

\subsection{Sample Complexity}
 First note, that in every experiment the algorithms always return the correct stable matching, similar to the study on MAB \cite{pmlr-v35-jamieson14}. We further discuss the anytime  performance of the algorithms in Section \ref{subsec:anytime_perf}. In Figure~\ref{fig:exp_2}, we present the average and the standard deviation of the sample complexity over the instances, for both preference settings, respectively.

In the first setting, we observe that the elimination algorithms behave similarly. This is because the randomly generated preferences do not allow the Improved Elimination Algorithm to terminate early. However, in the second setting, the Improved Elimination Algorithm outperforms the standard Elimination strategy. Additionally, the Uniform Sampling Strategy requires significantly more samples, even in the case of 20 players, where Remark~\ref{remark:Delta} indicates similar sample complexity, as the differences of the expected rewards are the similar (\(\Delta \approx 0.05\)). This occurs because our theoretical sample complexity measures the sufficient number of samples, while in practice some arms can be eliminated earlier. Finally, the Adaptive Sampling Strategy outperforms all other algorithms in both settings, as it dynamically refines the exploration based on the agents' preferences.

\subsection{Anytime performance}
\label{subsec:anytime_perf}
Now we study the performance of the algorithms at each time step \( t \), similar to the approaches described in \cite{pmlr-v35-jamieson14, auer2002finite} for the MAB. More specifically, after each time step \( t \), we can check whether the DA algorithm using the current preferences, outputs the correct $m_s^{\star}$. We also checked if the preferences are completely correct, and correct up to the stable match, respectively. In Figure~\ref{fig:actual_performance}, show the results averaged over 100 runs for Reward Setting 1 with 20 players. The results for the remaining experiments can be found in Appendix~\ref{apnd:sim}.

First, note that the curves for the two elimination algorithms overlap, as the only difference among them lies in the stopping criterion. The same is also true for the uniform sampling strategy. This is because elimination begins after the plotted time window (after 54.000 matchings), so the algorithms uniformly sample arms until that point. On the other hand, the adaptive algorithm quickly reduces the number of matchings to be explored, as indicated by the red arrow, which marks the average time when we need to explore less agents. This allows the algorithm to efficiently identify the preferences up to the stable pair and, consequently, the optimal stable matching. In addition, the algorithm fails to accurately estimate the preferences, as it focuses only on exploring the arms crucial for stability. Finally, we can also observe that the probability of achieving the optimal stable matching is greater than the other metrics, as indicated by Proposition~\ref{pr:probability-of-stable-matching-bound} and Lemma \ref{lemma:stopping_rule}.
\section{Conclusion \& Future Work}
\label{sec:conclusion}
In this work, we consider the stable marriage problem under uncertain preferences on one side of the market. Our main objective was to develop algorithms that can efficiently identify the true optimal stable matching with high probability. To this end, we proposed the novel concept of a Probably Correct Optimal Stable Matching and present several algorithms accompanied by theoretical analyses of their correctness and sample complexity. Finally, we support our theoretical results with an empirical evaluation of the performance of our algorithms in practice.

There are several interesting directions for future research. To begin, a promising area is the hardness analysis, particularly establishing a lower bound for the pure exploration problem. Another straightforward extension is to investigate how these algorithms perform when both sides of the market are uncertain about their preferences (discussed in Appendix~\ref{apnd:both_sides}). Additionally, analyzing the expected sample complexity \cite{mannor2004sample} would provide insights into the algorithm's performance in practice. In particular, the sample complexity of the Adaptive Sampling Strategy remains an open question, while experiments showcase its superiority over the other strategies. Another idea is applying the principles of the Elimination Algorithm to the regret minimization setting \cite{liu2020competing}. Finally, exploring alternative solution concepts, such as popular matchings \cite{huang2013popular} and median stable matchings \cite{teo1998geometry}, or study alternative models for two-sided markets \cite{Roth_Sotomayor_1990}, can potentially broaden the applicability of our approach.

\newpage
\bibliographystyle{ACM-Reference-Format} 
\bibliography{sample}

\appendix
\section{Uncertainty for both sides of the market.}
\label{apnd:both_sides}
In the main text, we discuss the case where only one side of the market is uncertain about its preferences. An interesting research direction is to consider the scenario where both sides of the market have uncertain preferences.

Here, we discuss how our results can potentially generalize in this setting. To begin with, our Proposition \ref{pr:probability-of-stable-matching-bound} also holds in this case, as we consider every agent in the market. The only difference is that we need to adjust the bounds in the proofs to account for the event that every agent has correctly estimated their preferences. Therefore, there must be additional factors based on the number of agents inside the logarithm.

Regarding the algorithms, the analysis of the uniform sampling strategy is straightforward, as one can obtain enough samples based on the minimum reward difference \( \Delta \) for both sides. For the case of the elimination algorithm, adjustments can also be made if a player-arm pair is eliminated when both agents are able to separate the confidence intervals w.r.t the agents of the other side of the market. Additionally, the stopping rule used in the Improved Elimination Algorithm, as outlined in Lemma \ref{lemma:stopping_rule}, can also be modified to consider preferences that are correct up to any stable match for both sides of the market. This can also be used to adjust the adaptive sampling strategy.

\section{Proof of Theorem \ref{th:theorem_etc}}
\label{apnd:proof_etc}

\begin{proof}
We define the event \( C_p \) as the event in which one player \( p \in \setPlayer \) correctly estimates their preferences, and the event \( C \) the event in which all players correctly estimate their preferences. Additionally, we let the event \( \mathcal{E} \) represent the case that the algorithm correctly outputs the optimal stable matching.

\noindent\emph{\textbf{Step 1 - bounding $P\left(\mathcal{E} \right)$}}\\
\noindent From Proposition \ref{pr:probability-of-stable-matching-bound} we know that:
\begin{equation*}
    P\left(\mathcal{E} \right)\geq P\left( C \right) = 1 - P\left( C^c \right) = 1 - P( \bigcup_{p\in\setPlayer} C^c_p),
\end{equation*}
where $C^c_p$ denote the opposite event of $C_p$.
Using the union bound we have that:
\begin{equation*}
    P\left(\mathcal{E}\right) \geq 1 - \sum_{p\in\setPlayer} P(C^c_p).
\end{equation*}
In order for our algorithm to be \(\delta\)-PACOS, i.e., to bound \(P\left(\mathcal{E}\right) \geq 1 - \delta\), we need:
\begin{equation*}
    P\left(C^c_p\right) \leq \delta/N \quad \forall p \in \setPlayer.
\end{equation*}

\noindent\emph{\textbf{Step 2 - bounding $P\left(C^c_p\right)$}} \\
The event $C_p$ for a player $p$ to correctly estimate their preference is equal to the intersection of the events that all consecutive arms (according to the preference) are correctly ordered, i.e., $Pr(C_p) = Pr(\bigcap_{i=1}^{K-1} C_{p, i, i+1})$. Here $C_{p, i, i+1}$ denotes the event that the $i$-th ranked agent $\arm$ and $(i+1)$-th ranked agent $a'$ in $\hat{\pref}_p$ are correctly ordered, i.e., $\expr_{p,\arm} > \expr_{p,a'}$. The opposite event $C^c_p$ can be bounded using the union bound as:
\begin{align*}
    &P\left(C^c_p\right) =  P\left(\bigcup_{i=1}^{K-1} C^c_{p, i, i+1} \right)  \\ 
    &\leq \sum_{i=1}^{K-1} P\left(C^c_{p, i, i+1} \right) \leq K \max_i P\left(C^c_{p, i, i+1} \right).
\end{align*}
\noindent  In order for $P\left(C^c\right) \leq \delta$ to hold we next aim to establish the bound $\max_{p,i} P\left(C^c_{p, i, i+1} \right) \leq \delta/NK$.

\noindent\emph{\textbf{Step 3 - bounding $P\left(C^c_{p, i, i+1}\right)$}} \\
\noindent Consider two consecutive arms, \(\arm_i\) and \(\arm_j\), with \(\Delta_{\player,i,j} = \expr_{\player,i} - \expr_{\player,j} > 0\) for an arbitrary player \(\player\). The event \(C^c_{\player,i,j}\) that the algorithm incorrectly orders the arms \(\arm_i\) and \(\arm_j\) after \(h\) samples from each arm, occurs when \(\hat{\expr}_{\player,i}(h) \leq \hat{\expr}_{\player, j}(h)\). Using the Hoeffding inequality, we can upper bound the probability of the event \(C^c_{\player,i,j}\) as follows, omitting the indexing for player \(p\) for simplicity:

\begin{align*}
&\Pr(C^c_{\player,i,j}) = \Pr(\hat{\expr}_{i}(h) \leq \hat{\expr}_{j}(h)) \\
& \leq \Pr\left(\hat{\expr}_{i}(h) < \expr_{i} - \frac{\Delta_{i,j}}{2}\right) + \Pr\left(\hat{\expr}_{j}(h) > \expr_{j} + \frac{\Delta_{i,j}}{2}\right) \\
& \leq 2 \exp\left(- \frac{\Delta_{i,j}^2}{2} h\right)
\end{align*}


\noindent\emph{\textbf{Step 4 - sample complexity}} \\
In order to bound $\max_{p,i} P\left(C^c_{p, i, i+1} \right) \leq \delta/NK$ we need:
\begin{equation*}
 2 \exp{\frac{-h \Delta_{min}^2}{2}} \leq \delta/NK \Rightarrow  \\
 h \geq \frac{2 \ln(2KN/\delta)}{\Delta_{\min}^2},
\end{equation*}
where $\Delta_{\min} = \min_{p, i, j} \Delta_{p, i, j} $

To conclude, w.p. at leat $1 - \delta$, the sample complexity of the algorithm is
\begin{equation*}
O(hK) = O(\frac{K \ln(KN/\delta)}{\Delta_{\min}^2} ).
\end{equation*}
\end{proof}
\section{The minimum matching cover}
\label{apnd:matching_algo}
This section explains how we match agents in each round of our process in order to efficiently sample all pairs $(p,a)$ (line 5 of Algorithm \ref{alg:elim}). Specifically, in each round $t$, we sample every non-eliminated pair, i.e., $(p, a) \in \setPlayer \times S_p$,  exactly once using the minimum number of matchings.

Now, consider the bipartite graph \( G_t = \{V, E_t\} \), where the vertices \( V = \{\setPlayer, \setArms\} \) represent the players and arms, and the edges $E_t$ connect the non-eliminated pairs $(p, a) \in \setPlayer \times S_p$ at round $t$. The objective is to cover all available pairs using the minimum number of matchings, i.e., a cardinality-minimal set of matchings $\mathfrak{M}$ that covers all remaining pairs.

In bipartite graphs, this problem can be formulated as a \emph{minimum edge coloring problem}, where matchings correspond to edges of the same color \cite{gabow1978algorithms}. According to the Kőnig-Hall Theorem, for a bipartite graph \( G \), the minimum number of colors required for edge coloring is equal to the maximum degree \( \Delta(G) \) of its vertices. Thus, we need \( \Delta(G_t) \) matchings to cover all edges in the graph at every round $t$.

Our method can be implemented using any edge coloring algorithm. In the experiments section, we use a recoloring edge coloring algorithm that recolors edges using augmenting paths when no available colors are left, as described in \cite{gabow1978algorithms}.

\section{Proof of Theorem \ref{th:elim_algo}}
\label{apnd:proof_elim}

\begin{proof} \textit{ } \\
\noindent \emph{\textbf{step 1 - Anytime confidence intervals}} \\
We start our proof by noticing that the values $\error_t = \sqrt{ \frac{\ln{(4KNt^2/\delta)}}{2t}}$ in Algorithm \ref{alg:elim} correspond to bounds of
any-time confidence intervals $CI_{p,\arm}(t) = [ \hat{\expr}_{p, \arm}(t) - \error_t, \hat{\expr}_{p, \arm}(t) + \error_t]$. More specifically, with probability at least $1-\delta$, at any time-step $t$, the expected rewards $\expr_{p, \arm}$ lie in the confidence interval $CI_{p,\arm}(t)$ for every pair of player $p$ and arm $\arm$. Let us denote this event by:
\begin{align}
    \mathcal{E} = \bigcap_{p \in \setPlayer} \bigcap_{a \in \setArms} \bigcap_{t = 1}^{\inf} \{ \mid \hat{\expr}_{p,\arm}(t) - \expr_{p,\arm} \mid \leq \error_t \}
\end{align}
To prove that $\Pr(\mathcal{E})\geq 1-\delta$, we first consider the individual events $\mathcal{E}_{p, \arm}(t)$ that the true mean $\expr_{p, \arm}$ lies in the confidence interval $CI_{p,\arm}(t)$ for some player $p$, arm $\arm$, and time $t$. We can bound the probability of the complement event $\mathcal{E}_{p, \arm}^c(t)$, i.e., $\Pr[\mid \hat{\expr}_{p,\arm}(t) - \expr_{p,\arm} \mid > \error_t]$ using the Hoeffding inequality and $\error_t = \sqrt{ \frac{\ln{(4KNt^2/\delta)}}{2t}}$ as:
\begin{align}
    \Pr[\mathcal{E}^c_{p, \arm}(t)] = \Pr[\mid \hat{\expr_i}(t) - \expr_i \mid > \error_t]\leq 2\exp(-2{\error_t}^2t) \leq \frac{\delta}{2NK t^2}
\end{align}
Using the union bound over all rounds,
\begin{align}
    \Pr[\bigcup_{t=1}^{\inf} \mathcal{E}_{p, \arm}^c(t) ] \leq \sum_{t=1}^{\inf} \Pr[\mathcal{E}_{p, \arm}^c(t) ] \leq \frac{\delta}{NK}
\end{align}
Using the union bound over all possible players and arms, we can bound the probability that the mean estimate for some arm is outside the confidence intervals, i.e.: 
\begin{align}
    \Pr[\mathcal{E}^c] \leq \sum_{p \in \setPlayer} \sum_{a \in \setArms} \Pr[\bigcup_{t=1}^{\inf} \mathcal{E}_{p, \arm}^c(t)] \leq \delta
\end{align}
So at any time \( t \), for every player, all arms have estimated rewards within their respective confidence intervals with probability at least \( 1 - \delta \):
\begin{align}
\label{eq:event_anytime_conf_int}
    \Pr[\mathcal{E}] &\geq 1 - \delta
\end{align}

\noindent \emph{\textbf{step 2 - Correctness argument}} \\
Now we will prove that our algorithm outputs the correct optimal stable matching $m^{\star}_s$ with probability at least \( 1 - \delta \). To prove this, we first demonstrate that as long as the event \(\mathcal{E}\) holds, our algorithm returns the correct ranking for every player. This is sufficient for obtaining the correct match. Then according to Proposition \ref{pr:probability-of-stable-matching-bound}, we can compute the correct stable optimal matching using the DA algorithm based on these preferences.

In the following, we drop the $p$ subscript to simplify notation. From \ref{eq:event_anytime_conf_int}) we know that event \(\mathcal{E}\) holds with high probability. In that case, the following hold:
\begin{enumerate}
    \item First, our algorithm will eventually terminate, as \(\error_t\) approaches zero as \(t\) increases. This ensures that there will be no overlapping arms for every player.
    \item Second, consider two arbitrary arms \(\arm_i\) and \(\arm_j\) with \(\expr_i \geq \expr_j\). If the arms have no overlapping confidence intervals, i.e., \(\hat{\expr}_i - \hat{\expr}_j \geq 2\error_t\), we can determine their relative ranking, i.e., \(\expr_i > \expr_j\), as follows:
    \begin{align}
        \expr_i - \expr_j &\geq \hat{\expr}_i(t) - \error_t - (\hat{\expr}_j(t) + \error_t) \\
        &\geq \hat{\expr}_i(t) - \hat{\expr}_j(t) - 2\error_t \\
        &\geq 0.
    \end{align}
    Thus, we conclude that \(\expr_i \geq \expr_j\) as required.   
    
    \item Third, if an arm is eliminated at time \(t\), it maintains its relative order according to the mean estimates when we continue to sample the remaining arms, as the confidence intervals of the other arms only shrink as \(t\) increases. 
    \\
    To illustrate this, consider two arms \(\arm_{i}\) and \(\arm_j\) with \(\Delta_{i,j} = \expr_{\arm_{i}} - \expr_{\arm_j} > 0\), without loss of generality. Let also \(\arm_i\) be eliminated at some round \(t\) while we continue to sample \(\arm_j\). Since arm \(\arm_i\) is eliminated, we know that \(\hat{\Delta}_{i,j} = \hat{\expr}_i(t) - \hat{\expr}_j(t) \geq 2\error_t\). 
    \\
    In some \(t' > t\), where we continue to sample \(\arm_j\),  we have a new estimate of the expected rewards, \(\hat{\expr}_j(t')\). Now consider the worst-case scenario where at round \(t\) we underestimate the expected rewards and at round \(t'\) we overestimate them, leading to:
    \begin{align}
      \hat{\expr}_j(t') - \hat{\expr}_j(t)  &\leq \expr + a_{t'} - (\expr - a_{t}) \leq 2\error_t \Rightarrow \\
      &\hat{\expr}_j(t') \leq \hat{\expr}_j(t) + 2\error_t.
    \end{align}
    Therefore, the relative position of the arm remains the same as:
    \begin{align}
        \hat{\expr}_{i}(t) - \hat{\expr}_{j}(t') &\geq \hat{\expr}_{i}(t) - \hat{\expr}_{j}(t) - 2\error_t \\ 
        &= \hat{\Delta}_{i,j} - 2\error_t \geq 0 \Rightarrow \\
        & \hat{\expr}_{\arm_{i}}(t) \geq \hat{\expr}_{\arm_j}(t').
    \end{align}
    Thus, the eliminated arm maintains its relative position according to the mean estimates as long as the average rewards are between the confidence intervals.
    
    \item Fourth, when an arm is eliminated, we know its relative position compared to all other arms with high probability, as it has no overlapping confidence intervals with any other arm.
    
    \item Finally, when all arms are eliminated for a player, we know the relative position between every arm, and thus we have the correct preferences for the player with high probability. 
\end{enumerate}

To conclude, using Proposition \ref{pr:probability-of-stable-matching-bound} our algorithm outputs the correct optimal stable matching with probability at least \(1 - \delta\).

\noindent \emph{\textbf{step 3 - Sample Complexity}} \\
We now begin the analysis of the sample complexity by first presenting a Lemma that bounds the sufficient number of samples required to eliminate an arbitrary arm for a player. We then derive the overall sample complexity by accounting for the elimination process, as outlined in the main text.\\

\noindent \emph{\textbf{step 3.1 - Sample Complexity to eliminate an arm $t_{p,\arm}$}} \\
Now we present a detailed proof for Lemma \ref{Lemma:informal_t2}. \\

\setcounter{Lemma}{0}
\begin{Lemma}
\label{Lemma:informal_t2}
For a player $\player \in \setPlayer$ and arm $\arm \in \setArms$, let $\Delta_{\player,\arm} = \min_{\arm' \in \setArms \setminus \{\arm\}} \Delta_{\player, \arm, \arm'}$. With probability at least $1-\delta$, we need at most $t_{p,\arm}$ samples to eliminate an arm $\arm$ from $S_\player$ in line 9 of Algorithm~\ref{alg:elim}:
\begin{equation}
t_{\player,\arm} = O\left(\frac{\ln{\left(KN/\delta \Delta_{\player,\arm} \right)}}{\Delta_{\player,\arm}^2}\right).
\end{equation}
\end{Lemma}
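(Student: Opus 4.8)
The plan is to condition on the "good" event $\mathcal{E}$ that every true mean stays inside all of its anytime confidence intervals, reduce the elimination rule in line 9 to a purely deterministic threshold on $\error_t$, and then invert that threshold to read off a sufficient value of $t_{\player,\arm}$.

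First I would establish that the widths $\error_t = \sqrt{\ln(4KNt^2/\delta)/(2t)}$ indeed yield valid anytime confidence intervals. Applying Hoeffding's inequality to a fixed pair $(\player,\arm)$ at a fixed time $t$ gives $\Pr[|\hat{\expr}_{\player,\arm}(t) - \expr_{\player,\arm}| > \error_t] \le 2\exp(-2\error_t^2 t) = \delta/(2KNt^2)$, since $2\error_t^2 t = \ln(4KNt^2/\delta)$. A union bound over $t \in \mathbb{N}$ (using $\sum_{t\ge 1} t^{-2} = \pi^2/6 < 2$) and over the $KN$ pairs then shows $\Pr[\mathcal{E}] \ge 1-\delta$. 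This is verbatim Step 1 of the proof of Theorem~\ref{th:elim_algo}, so I would simply cite it rather than repeat it.

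Next I would argue deterministically under $\mathcal{E}$. Fixing $\player$ and $\arm$, take any other arm $\arm'$ and assume without loss of generality $\expr_{\player,\arm} > \expr_{\player,\arm'}$. On $\mathcal{E}$ the lower endpoint of $C_{\player,\arm}$ exceeds the upper endpoint of $C_{\player,\arm'}$ whenever $(\hat{\expr}_{\player,\arm}(t)-\error_t) - (\hat{\expr}_{\player,\arm'}(t)+\error_t) \ge \expr_{\player,\arm} - \expr_{\player,\arm'} - 4\error_t = \Delta_{\player,\arm,\arm'} - 4\error_t$, which is positive as soon as $\error_t < \Delta_{\player,\arm,\arm'}/4$. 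Taking the minimum over $\arm' \ne \arm$, arm $\arm$ has no overlap with any other arm, and is thus eliminated in line 9, once $\error_t < \Delta_{\player,\arm}/4$ with $\Delta_{\player,\arm} = \min_{\arm' \ne \arm}\Delta_{\player,\arm,\arm'}$.

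The hard part will be inverting this last inequality, which is the only genuinely technical step. Squaring, $\error_t < \Delta_{\player,\arm}/4$ is equivalent to $t\,\Delta_{\player,\arm}^2 > 8\ln(4KN/\delta) + 16\ln t$, a transcendental inequality in $t$ since $t$ appears both linearly and inside the logarithm. Following the analysis of~\cite{even2006action}, the resolution is to substitute the ansatz $t = C\,\ln\!\big(KN/(\delta\Delta_{\player,\arm})\big)/\Delta_{\player,\arm}^2$ for a suitable absolute constant $C$ and verify it satisfies the inequality: the linear term dominates because $\ln t = O\big(\ln(KN/\delta) + \ln(1/\Delta_{\player,\arm}^2)\big)$, and the extra $\ln(1/\Delta_{\player,\arm})$ contributed by the $\ln t$ term is precisely what forces $\Delta_{\player,\arm}$ to appear inside the logarithm of the stated bound. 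Since the first elimination time is at most this $t$, and the entire argument is carried out on $\mathcal{E}$ which holds with probability at least $1-\delta$, this yields $t_{\player,\arm} = O\big(\ln(KN/(\delta\Delta_{\player,\arm}))/\Delta_{\player,\arm}^2\big)$ and completes the proof.
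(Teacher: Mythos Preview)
Your proposal is correct and follows essentially the same approach as the paper: condition on the good event $\mathcal{E}$ (established via Hoeffding plus a union bound over pairs and time, exactly as in Step~1 of the proof of Theorem~\ref{th:elim_algo}), show that on $\mathcal{E}$ the confidence intervals of $\arm$ and any $\arm'$ separate once $\Delta_{\player,\arm,\arm'} > 4\error_t$, take the minimum over $\arm'$, and then invert $\error_t < \Delta_{\player,\arm}/4$ by appealing to the analysis of~\cite{even2006action}. Your additional sentence explaining why the $\ln t$ term forces $\Delta_{\player,\arm}$ into the logarithm is a nice clarification that the paper leaves implicit.
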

\setcounter{Lemma}{2}

\begin{proof}
\noindent First, consider some fixed player $p$ and drop the subscript $p$ for simplicity. Let us take two arbitrary arms $\arm_i$ and $\arm_{j}$ with $\expr_i \geq \expr_j$. In order for the arms to have non-overlapping confidence intervals, we want:
\begin{equation}
\label{eq:no_overlap}
    \hat{\Delta}_{i,j} = \hat{\expr}_i(t) -\hat{\expr}_j(t)  > 2 \error_t
\end{equation}
If the event $\mathcal{E}$ holds for all the arms and time steps,  the mean estimates of the arms are within the confidence intervals. \\
So for an arbitrary arm $\arm_k$ we have:
\begin{equation}
    \expr_k + \error_t \geq  \hat{\expr}_k(t) \geq \expr_k - \error_t 
\end{equation}
and thus:
\begin{equation}
    \hat{\expr}_i(t) \geq \expr_i - \error_t  \; \; \; \textit{  and  } \; \; \;  
 - \hat{\expr}_j(t) \geq -(\expr_j + \error_t)  
\end{equation}
So the event of equation \ref{eq:no_overlap} is guaranteed to occur if $\Delta_{i,j} > 4\error_t$ as it implies that:
\begin{align}
    \hat{\Delta}_{i,j} = \hat{\expr}_i(t) - \hat{\expr}_j(t) &\geq  (\expr_i -\error_t) - (\expr_j + \error_t) = \Delta_{i,j} - 2\error_t \Rightarrow \\
    & \hat{\Delta}_{i,j} > 2\error_t
\end{align}
By solving for the minimum value of \( t \) for which \( \Delta_{i,j} > 4\error_t \), we can upper bound the number of samples sufficient for the arms to have non-overlapping confidence intervals. Similarly to the analysis in \cite{even2006action} we have:
\begin{equation}
t_{\arm_i,\arm_j} = O\left(\frac{\ln{(NK/\delta\Delta_{i,j})}}{\Delta_{i,j}^2}\right)
\end{equation}

Now we can apply this result to each player $p$. In order for an arm $\arm$ to be eliminated it must have non-overlapping CI with every other arms of the player $p$. So, under the event $\mathcal{E}$ witch holds w.p.a. $1-\delta$, the sufficient number of samples $t_{p,\arm}$ are:
\begin{equation}
t_{p,\arm} =O(\frac{\ln{(NK/\delta\Delta_{p,\arm})}}{\Delta_{p,\arm}^2})  
\end{equation}
where $\Delta_{\player,\arm} = \min_{\arm' \in \setArms \setminus \{\arm\}} \Delta_{\player, \arm, \arm'}$, as it has to resolve the uncertainty of the closest arm. \\

\end{proof}

\noindent \emph{\textbf{step 3.2 - Overall sample complexity}} \\
Here we measure sample complexity in terms of the number of matchings that are sufficient for outputting the correct optimal stable match with high probability.

In each round \( t \), our algorithm samples every non-eliminated player-arm pair exactly once. The number of matchings in each round $t$ corresponds to the degree \( \Delta(G_t) \) of the underlying bipartite graph \( G_t = \{V, E_t\} \), with vertices \( V = \{\setPlayer, \setArms\} \) and the edges $E_t$ connect the non-eliminated pairs $\{(p,a) \in \setPlayer \times \setArms \mid a \in S_\player\}$. To reduce the number of matchings in each round, we have to reduce the degree of the graph.

We thus consider $s = 1, \dots, K$ phases where each phase corresponds to the elimination of the subset of pairs $r_s$ to reduce the degree of the graph. So under the event $\mathcal{E}$, we can use Lemma \ref{Lemma:informal_t2} to define $r_s$ as the set of pairs with the lowest sample complexity i.e.: $$r_i =  \arg\min_{r \in \mathcal{P}(S \setminus \bigcup_{j<i} r_j)} \max_{(p,a) \in r}  t_{p,\arm} \; \forall i=1, \dots, K$$
with 
$$\mathcal{P}(S) = \{r \subseteq S: \Delta(G(S)) - \Delta(G(S \setminus r)) = 1\}$$
denote the subsets of pairs that can reduce the degree of the graph. Note that here the $r_1, \dots, r_K$ form a partition of all player-arm pairs. 
      
In every phase $s$, we perform  $t_s - t_{s-1}$ iterations, where $t_s$ are sufficient number of steps to eliminate $r_s$ determined by the player-arm pair $(\player, \arm) \in r_s$ that takes the longest to eliminate i.e.: $t_s = \max_{(p,a) \in r_s} t_{p,\arm}$ with $t_0 = 0$.

Finally, in each step $t$, every matching cover consists of $K-s+1$ matchings. Thus, under the event $\mathcal{E}$ witch holds w.p.a. $1-\delta$, the total number of  matching samples is given by:
    \begin{equation*}
        \sum_{s=1}^{K} (K-s+1) (t_s-t_{s-1}) = \sum_{s=1}^{K} t_s = O\left(\sum_{s=1}^{K} \max_{(p,a) \in r_s} t_{p,\arm}\right)
    \end{equation*}
    
To conclude our Algorithm~\ref{alg:elim} outputs the correct stable matching with probability at least $1 - \delta$ after at most:
\begin{equation*}
       O\left(
            \sum_{s=0}^{K} \max_{(p,a) \in m_s} t_{p,\arm}
        \right) = O\left(
            \sum_{s=0}^{K} \max_{(p,a) \in m_s} \frac{\ln{\left(KN/\delta \Delta_{\player,\arm} \right)}}{\Delta_{\player,\arm}^2}
        \right) 
\end{equation*}
matching samples.   
\end{proof}



\section{simulations}
\label{apnd:sim}
In Figures \ref{fig:actual_performance4} and \ref{fig:actual_performance5}, we present the results on the anytime performance of the algorithm, similar to the subsection \ref{subsec:anytime_perf}, for different numbers of players and both settings, respectively.

\begin{figure*}[h]
    \centering
    \includegraphics[scale=0.45]{./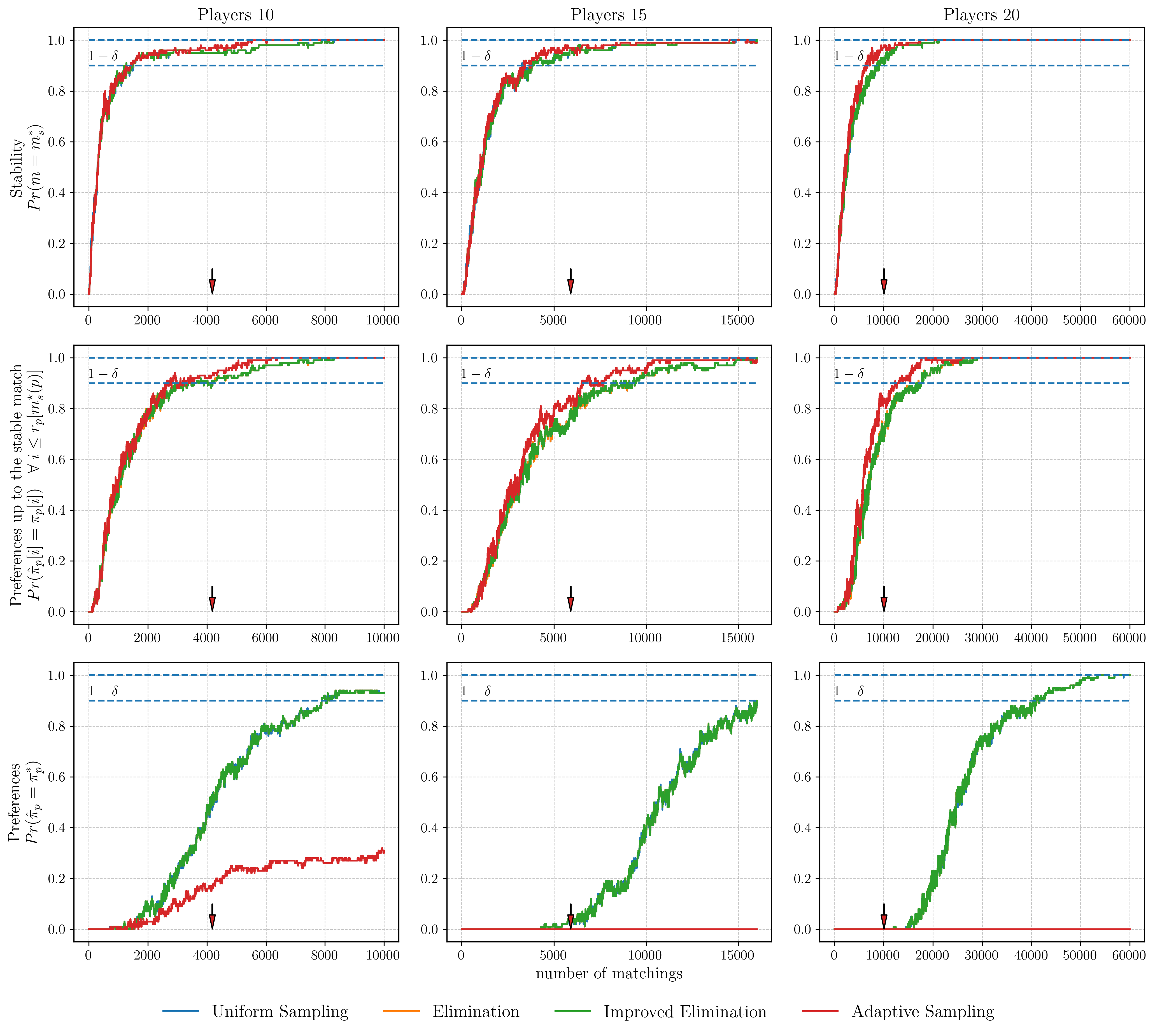}
    \caption{Anytime performance of the algorithm for the first instance setting with varying numbers of agents on each side. The figure illustrates the average number of times the algorithm identifies (left) the optimal stable matching, (middle) the correct preferences up to the stable match for each player, and (right) the correct preferences for every player after each matching.}

\label{fig:actual_performance4}
\end{figure*}

\begin{figure*}[h]
    \centering
    \includegraphics[scale=0.45]{./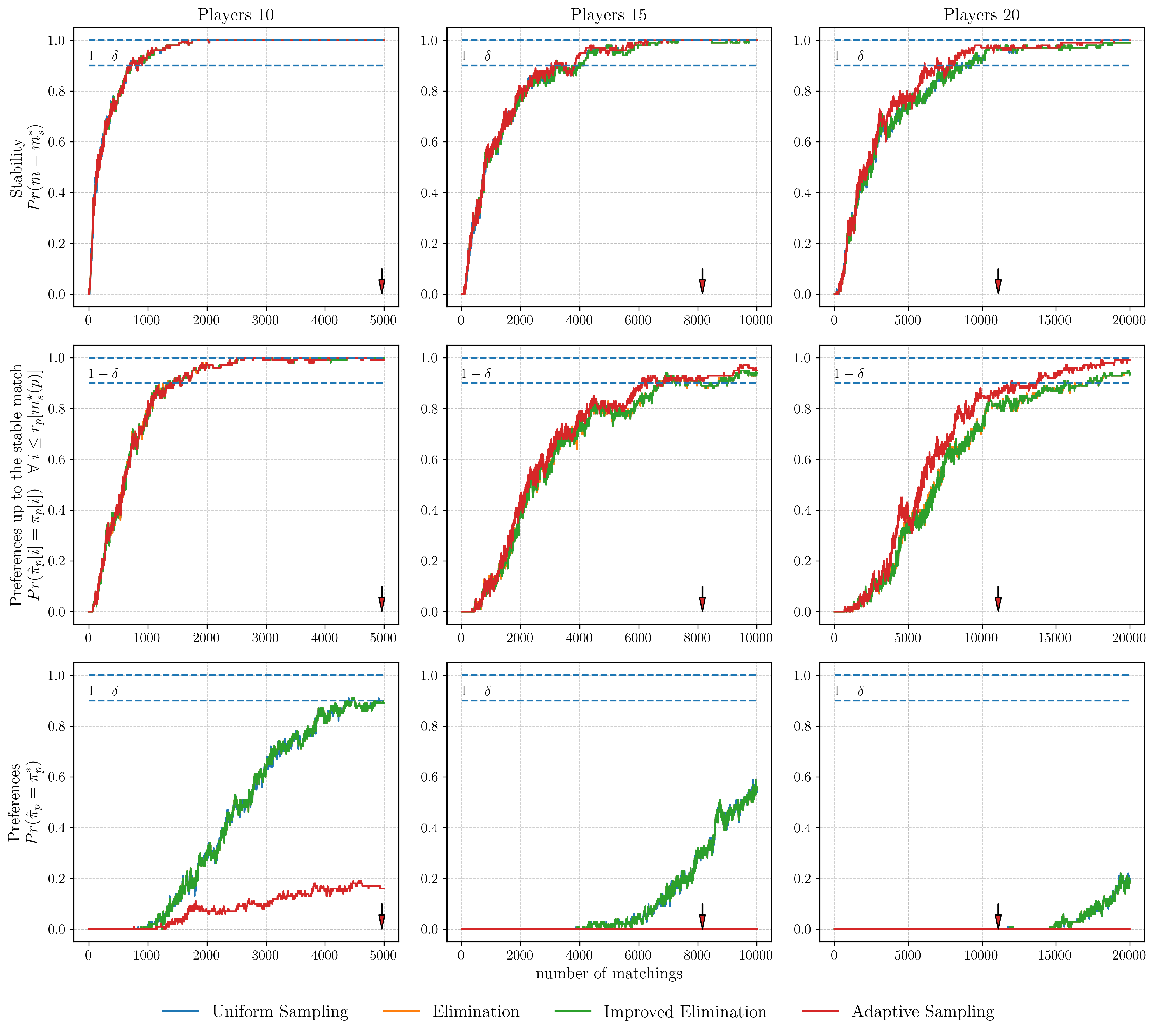}
    \caption{Anytime performance of the algorithm for the second instance setting with varying numbers of agents on each side. The figure illustrates the average number of times the algorithm identifies (left) the optimal stable matching, (middle) the correct preferences up to the stable match for each player, and (right) the correct preferences for every player after each matching.}
    \label{fig:actual_performance5}
\end{figure*}

\end{document}